\newcolumntype{R}{>{$}r<{$}} 
\newcolumntype{C}{>{$}c<{$}}
\definecolor{darkblue}{rgb}{0.0,0.0,0.2}
\definecolor{darkgreen}{rgb}{0.0,0.3,0.0}
\newcommand{\Comments}{0}
\newcommand{\mynote}[2]{\ifnum\Comments=1\textcolor{#1}{#2}\fi}
\newcommand{\mytodo}[2]{\ifnum\Comments=1
	\todo[linecolor=#1!80!black,backgroundcolor=#1,bordercolor=#1!80!black]{#2}\fi}
\newcommand{\journal}[1]{}
\newcommand{\reals}{\mathbb{R}}
\newcommand{\prop}[1]{\mathrm{prop}[#1]}
\newcommand{\faces}{\mathrm{faces}}
\newcommand{\simplex}{\Delta_\Y}
\newcommand{\A}{\mathcal{A}}
\newcommand{\B}{\mathcal{B}}
\newcommand{\C}{\mathcal{C}}
\newcommand{\E}{\mathbb{E}}
\newcommand{\F}{\mathcal{F}}
\newcommand{\R}{\mathcal{R}}
\newcommand{\Sc}{\mathcal{S}}
\newcommand{\U}{\mathcal{U}}
\newcommand{\V}{\mathcal{V}}
\newcommand{\X}{\mathcal{X}}
\newcommand{\Y}{\mathcal{Y}}
\newcommand{\inprod}[2]{\langle #1, #2 \rangle}
\newcommand{\relint}{\mathrm{relint}}
\newcommand{\toto}{\rightrightarrows}
\newcommand{\conv}{\mathrm{conv}\,}
\newcommand{\cone}{\mathrm{cone}\,}
\newcommand{\ones}{\mathbbm{1}}
\newcommand{\sign}{\mathrm{sign}}
\newcommand{\signstar}{\mathrm{sign}^*}
\renewcommand{\emptyset}{\varnothing}
\newcommand{\onespi}[2]{\ones_{#1,#2}}
\renewcommand{\boxed}[1]{\overline{#1}}
\newcommand{\Vfaces}{\V^{\text{face}}}
\newcommand{\ellabs}{\ell_{\text{abs}}^f}
\newcommand{\fzo}{{f_{\text{0-1}}}}
\newcommand{\Spi}[1]{\{\pi_1,\ldots,\pi_{#1}\}}
\newcommand{\ignore}[1]{}
\DeclareMathOperator*{\argmax}{arg\,max}
\DeclareMathOperator*{\argmin}{arg\,min}
\newtheorem{lemmac}[theorem]{Lemma}
\newtheorem{propositionc}[theorem]{Proposition}
\title[Structured Abstain and Lov\'asz Hinge]{The Structured Abstain Problem and the Lov\'asz Hinge}
\begin{document}

\maketitle
\begin{abstract}
  The Lov\'asz hinge is a convex surrogate recently proposed for structured binary classification, in which $k$ binary predictions are made simultaneously and the error is judged by a submodular set function.
  Despite its wide usage in image segmentation and related problems, its consistency has remained open.
  We resolve this open question, showing that the Lov\'asz hinge is inconsistent for its desired target unless the set function is modular.
  Leveraging a recent embedding framework, we instead derive the target loss for which the Lov\'asz hinge is consistent.
  This target, which we call the structured abstain problem, allows one to abstain on any subset of the $k$ predictions.
  We derive two link functions, each of which are consistent for all submodular set functions simultaneously.
\end{abstract}

\section{Introduction}

Structured prediction addresses a wide variety of machine learning tasks in which the error of several related predictions is best measured jointly, according to some underlying structure of the problem, rather than independently~\citep{osokin2017structured,gao2011consistency,hazan2010direct,tsochantaridis2005large}.
This structure could be spatial (e.g., images and video), sequential (e.g., text), combinatorial (e.g., subgraphs), or a combination of the above.
As traditional target losses such as 0-1 loss measure error independently, more complex target losses are often introduced to capture the joint structure of these problems.

As with most classification-like settings, optimizing a given discrete target loss is typically intractable.
We therefore seek surrogate losses which are both convex, and thus efficient to optimize, and statistically consistent, meaning they actually solve the desired problem.
Another important factor in structured prediction is that the number of possible labels and/or target predictions is often exponentially large.
For example, in the structured binary classification problem, one makes $k$ simultaneous binary predictions, yielding $2^k$ possible labels.
In these settings, it is crucial to find a surrogate whose prediction space is low-dimensional relative to the relevant parameters.

In general, however, we lack surrogates satisfying all three desiderata: convex, consistent, and low-dimensional~\citep{mcallester2007generalization,nowozin2014optimal}
One promising low-dimensional surrogate for structured binary classification, the Lov\'asz hinge, achieves convexity via the well-known Lov\'asz extension for submodular set functions~\citep{yu2018lovasz,yu2015lovaszarxiv}.
Despite the fact that this surrogate and its generalizations~\citep{berman2018lovasz} have been widely used, e.g.\ in image segmentation and processing~\citep{athar2020stem,chen2020afod,neven2019instance}, its consistency has thus far not been established.

Using the embeddings framework of \citet{finocchiaro2022embedding}, we show the inconsistency of Lov\'asz hinge for structured binary classification (\S~\ref{sec:inconsistency}).
Our proof relies on first determining what the Lov\'asz hinge is actually consistent for: the \emph{structured abstain problem}, a variation of structured binary prediction in which one may abstain on a subset of the predictions (\S~\ref{sec:our-embedding}).
For reasons similar to classification with an abstain option~\citep{ramaswamy2018consistent,bartlett2008classification}, this problem may be of interest to the structured prediction community.
Finally, while the embedding framework shows that a calibrated link must exist, in our case actually deriving such a link is nontrivial.
In \S~\ref{sec:constructing-link} we derive two complementary link functions, both of which are calibrated simultaneously for all submodular set functions parameterizing the problem.

\section{Background}
\label{sec:background}

\subsection{Notation}

See Tables~\ref{tab:notation} and~\ref{tab:notation-proofs} in \S~\ref{app:notation} for full tables of notation.
Throughout, we consider predictions over $k$ binary events, yielding $n = 2^k$ total outcomes, with each label $y \in \Y = \{-1,1\}^k$.
Predictions are generically denoted $r\in\R$; we often take $\R=\Y$, or consider predictions $v \in \V := \{-1,0,1\}^k$ or $u \in \reals^k$.
Loss functions measure these predictions against the observed label $y\in\Y$.
In general, we denote a discrete loss $\ell : \R \times \Y \to \reals_+$ and surrogate $L : \reals^k \times \Y \to \reals_+$.
We also occasionally restrict a loss $L$ to a domain $\Sc \subseteq \R$ and define $L|_{\Sc} : (u,y) \mapsto L(u,y)$ for all $u \in \Sc$.

Let $[k] := \{1,\ldots,k\}$.
When translating from vector functions to set functions, it is often useful to use the shorthand $\{ u \leq c \} := \{ i \in [k] \mid u_i \leq c\}$ for $u\in\reals^k$, $c\in\reals$, and similarly for other set comprehensions.
Additionally, for any $S\subseteq [k]$, we let $\ones_S \in \{0,1\}^k$ with $(\ones_S)_i = 1 \iff i\in S$ be the 0-1 indicator for $S$.
Let $\Sc_k$ denote the set of permutations of $[k]$.
For any permutation $\pi \in \Sc_k$, and any $i\in\{0,1,\ldots,k\}$, define $\onespi{\pi}{i} = \ones_{\Spi{i}}$, where $\onespi{\pi}{0} = 0 \in \reals^k$.

For $u,u'\in\reals^k$, the Hadamard (element-wise) product $u\odot u'\in\reals^k$ given by $(u \odot u')_i = u_iu'_i$ plays a prominent role.
We extend $\odot$ to sets in the natural way; e.g., for $U\subseteq \reals^k$ and $u'\in\reals^k$, we define $U \odot u' = \{u\odot u' \mid u\in U\}$.

We often decompose elements of $u \in \reals^k$ by their sign and absolute value.
To this end, we define $\sign : \reals^k \to \V$ to be the (element-wise) sign of $u$, and use the function $\signstar : \reals^k \to \Y$ to denote an arbitrary function that agrees with $\sign$ when $|u_i|\neq 0$ and break ties arbitrarily at $0$.
We let $|u| \in \reals^k_+$ be the element-wise absolute value $|u|_i = |u_i|$, and frequently use the fact that $|u| = u\odot\signstar(u) = u\odot\sign(u)$.
We define $\boxed u = \sign(u) \odot \min(|u|, \ones)$ to ``clip'' $u$ to $[-1,1]^k$.
Finally, we denote $((u)_+)_i = \max(u_i,0)$.

\subsection{Submodular functions and the Lov\'asz extension}

A set function $f:2^{[k]}\to\reals$ is \emph{submodular} if for all $S,T\subseteq [k]$ we have $f(S) + f(T) \geq f(S\cup T) + f(S\cap T)$.
If this inequality is strict whenever $S$ and $T$ are incomparable, meaning $S\not\subseteq T$ and $T\not\subseteq S$, then we say $f$ is \emph{strictly submodular}.
A function is \emph{modular} if the submodular inequality holds with equality for all $S,T\subseteq [k]$.
The function $f$ is \emph{increasing} if we have $f(S\cup T) \geq f(S)$ for all disjoint $S,T\subseteq [k]$, and \emph{strictly increasing} if the inequality is strict whenever $T\neq\emptyset$.
Finally, we say $f$ is \emph{normalized} if $f(\emptyset) = 0$.
Let $\F_k$ be the class of set functions $f:2^{[k]}\to\reals$ which are submodular, increasing, and normalized.

The structured binary classification problem is given by the following discrete loss $\ell^f:\R\times\Y\to\reals$, with $\R=\Y$,
\begin{equation}
\label{eq:discrete-set-loss}
\ell^f(r,y) = f(\{ r\odot y < 0 \}) = f(\{ i \in [k] \mid r_i \neq y_i \})~.
\end{equation}
In words, $\ell^f$ measures the joint error of the $k$ predictions by applying $f$ to the set of mispredictions, i.e., indices corresponding to incorrect predictions.
For the majority of the paper, we will consider $f\in\F_k$.
In particular, we will make the natural assumption that $f$ is increasing: making an additional error cannot decrease error.
The assumption that $f$ be normalized is without loss of generality.

A classic object related to submodular functions is the \emph{Lov\'asz extension} to $\reals^k$~\citep{lovasz1983submodular}, which is known to be convex when (and only when) $f$ is submodular~\cite[Proposition 3.6]{bach2013learning}.
For any permutation $\pi\in\Sc_k$, define $P_\pi = \{x\in\reals^k_+ \mid x_{\pi_1} \geq \cdots \geq x_{\pi_k}\}$, the set of nonnegative vectors ordered by $\pi$.
The Lov\'asz extension of a normalized set function $f:2^{[k]}\to\reals$ can be formulated in several equivalent ways \citep[Definition 3.1]{bach2013learning}.
\begin{equation}\label{eq:lovasz-ext}
  F(x) = \max_{\pi\in \Sc_k} \sum_{i=1}^k x_{\pi_i} (f(\Spi{i})-f(\Spi{i-1}))~.
\end{equation}
Given any $x\in\reals^k_+$, the argmax in eq.~\eqref{eq:lovasz-ext} is the set $\{\pi\in\Sc_k \mid x \in P_\pi\}$, i.e., the set of all permutations that order the elements of $x$.
For any $\pi \in \Sc_k$ such that $x\in P_\pi$, we may therefore write
\begin{equation}
\label{eq:lovasz-ext-pi-u}
F(x) = \sum_{i=1}^k x_{\pi_i} (f(\Spi{i})-f(\Spi{i-1}))~.
\end{equation}

For any $f\in\F_k$, let $F$ be the Lov\'asz extension of $f$.
\citet{yu2018lovasz} define the \emph{Lov\'asz hinge} as the loss $L^f:\reals^k\times\Y\to\reals_+$ given as follows.
\begin{equation}
\label{eq:lovasz-hinge}
L^f(u,y) = F\bigl((\ones - u \odot y)_+\bigr)~.~
\end{equation}
The Lov\'asz hinge is proposed as a surrogate for the structured binary classification problem in eq.~\eqref{eq:discrete-set-loss}, using the link $\signstar$ to map surrogate predictions $u \in \reals^k$ back to the discrete report space $\R = \Y$.
From eq.~\eqref{eq:lovasz-ext}, the Lov\'asz extension is polyhedral (piecewise-linear and convex) as a maximum of a finite number of affine functions.
Hence $L^f$ is a polyhedral loss function.

Immediately from the definition, the fact that $\odot$ is symmetric, and $x \mapsto x\odot y$ is an involution for any $y\in\Y$, we have the following.
\begin{lemma}\label{lem:lovasz-symmetry}
  For all $u\in\reals^k$ and $y,y'\in\Y$, $L^f(u,y) = L^f(u\odot y',y\odot y')$.
\end{lemma}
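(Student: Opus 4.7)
The plan is to unfold the definition of $L^f$ on both sides and show that the arguments to $F$ are identical. Specifically, starting from
\[
L^f(u\odot y', y\odot y') = F\bigl((\ones - (u\odot y')\odot(y\odot y'))_+\bigr),
\]
I would simplify the inner Hadamard product. The first step is to observe that $\odot$ is commutative and associative, so
\[
(u\odot y')\odot(y\odot y') = u\odot y\odot (y'\odot y').
\]
The second step is to use the key fact that any $y'\in\Y=\{-1,1\}^k$ satisfies $y'\odot y' = \ones$, since $(y'_i)^2=1$ for each $i$. Substituting this yields $u\odot y\odot \ones = u\odot y$, and the argument to $F$ reduces to $(\ones - u\odot y)_+$, which is exactly the argument in the definition of $L^f(u,y)$.

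I expect no real obstacle here: the lemma is a direct consequence of the elementary algebra of $\odot$ on $\{-1,1\}^k$, and the hinge $L^f$ depends on $(u,y)$ only through the combination $u\odot y$. If anything, the only item worth being explicit about is that the commutativity/associativity extension to the set-valued definition of $\odot$ given in the notation carries over pointwise, so the rearrangement above is literal equality of vectors in $\reals^k$ rather than equality of sets.
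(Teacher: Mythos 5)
Your proposal is correct and matches the paper's own justification, which notes that the lemma follows immediately from the definition, the symmetry of $\odot$, and the fact that $x \mapsto x\odot y'$ is an involution (i.e., $y'\odot y' = \ones$) for $y'\in\Y$. Your unfolding of $(u\odot y')\odot(y\odot y') = u\odot y$ is exactly this observation made explicit.
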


\subsection{Running examples}
\label{sec:running-examples}

We will routinely refer to two running examples.
For the first, consider the case where $f$ is modular.
Modular set functions can be parameterized by any $w\in\reals^k_+$, so that $f_w(S) = \sum_{i\in S} w_i$.
In this case $\ell^f$ reduces to \emph{weighted Hamming loss}, and $L^f$ to weighted hinge, the consistency of which is known~\cite[Theorem 15]{gao2011consistency}.
\begin{align}
  L^{f_w}(u,y)
  &= \max_{\pi\in \Sc_k} \sum_{i=1}^k ((1-u\odot y)_+)_{\pi_i} (f(\Spi{i})-f(\Spi{i-1}))
    \nonumber
  \\
  &= \sum_{i=1}^k (1-u_i y_i)_+ (w_i)~.
  \label{eq:weighted-hamming}
\end{align}
For the other example, $\fzo$ given by $\fzo(\emptyset)=0$ and $\fzo(S)=1$ for $S\neq\emptyset$.
Here the Lov\'asz hinge reduces to
\begin{align}
  L^{\fzo}(u,y)
  &= \max_{\pi\in \Sc_k} \sum_{i=1}^k ((1-u\odot y)_+)_{\pi_i} (f(\Spi{i})-f(\Spi{i-1}))
    \nonumber
  \\
  &= \max_{i\in[k]} \; (1-u_i y_i)_+~.
  \label{eq:BEP-simple-form}
\end{align}

In fact, $L^\fzo$ is equivalent to the BEP surrogate by \citet{ramaswamy2018consistent} for the problem of multiclass classification with an abstain option.
The target loss for this problem is $\ell_{1/2}:[n]\cup \{\bot \} \times [n] \to \reals_{+}$ defined by $\ell_{1/2}(r,y)=0$ if $r=y$, $1/2$ if $r=\bot$, and $1$ otherwise.
Here, the report $\bot$ corresponds to ``abstaining'' if no label is sufficiently likely, specifically if no $y\in \Y$  has $p_y \geq 1/2$.
The BEP surrogate is given by
\begin{align}
  \label{eq:BEP}
L_\frac{1}{2}(u,\hat y) &= \left(\max_{j\in [k]}\; B(\hat y)_j u_j +1\right)_+
\end{align}
where $B:[n]\rightarrow \{-1,1\}^k$ is an arbitrary injection.
Substituting $y=-B(\hat y)$ in eq.~\eqref{eq:BEP}, and moving the $(\cdot)_+$ inside, we recover eq.~\eqref{eq:BEP-simple-form}.

\subsection{Property elicitation and calibration}
When considering polyhedral (piecewise-linear and convex) losses, like the Lov\'asz hinge in eq.~\eqref{eq:lovasz-hinge}, \citet[Theorem 8]{finocchiaro2022embedding} show that indirect property elicitation is equivalent to statistical consistency, hence we often use property elicitation as a tool to study consistent polyhedral surrogates for a given discrete loss.

\begin{definition}\label{def:prop-elicits}
	A property $\Gamma : \simplex \to 2^{\R} \setminus \{\emptyset\}$ is a function mapping distributions over labels to reports.
	A loss $L : \R \times \Y \to \reals_+$ \emph{elicits} a property $\Gamma$ if, for all $p \in \simplex$,
	\begin{align*}
	\Gamma(p) = \argmax_{r \in \R} L(r;p)~.~
	\end{align*}
	Moreover, if $\E_{Y \sim p}L(\cdot,Y)$ attains its infimum for all $p \in \simplex$, we say $L$ is \emph{minimizable}, and elicits some unique property, denoted $\prop{L}$.
\end{definition}

In order to connect property elicitation to statistical consistency, we work through the notion of calibration, which is equivalent to consistency in our setting~\citep{bartlett2006convexity,zhang2004statistical,ramaswamy2016convex}.
One desirable characteristic of calibration over consistency is the ability to abstract features $x \in \X$ so that we can simply study the expected loss over labels through the distribution $p \in \simplex$.
We often denote $\E_{Y \sim p} L(u,Y) := L(u;p)$, and $\E_{Y \sim p}\ell(r,Y) := \ell(r;p)$, which more readily aligns with property elicitation definitions.

\begin{definition}\label{def:calibration}
	Let $\ell:\R\times\Y\to\reals$ with $|\R|<\infty$.
	A surrogate $L : \reals^d \times \Y \to \reals_+$ and link $\psi : \reals^d \to \R$ pair $(L, \psi)$ is \emph{calibrated} with respect to $\ell$ if for all $p\in\simplex$,
	\begin{align*}
	\inf_{u : \psi(u) \not \in \prop{\ell}(p)} L(u;p) > \inf_{u \in \reals^d} L(u;p)~.
	\end{align*}
\end{definition}

We use calibration as an equivalent property to study statistical consistency in this paper, and when restricting to polyhedral losses, (indirect) property elicitation implies calibration.
Since the Lov\'asz hinge is a polyhedral surrogate, we specifically use embeddings, which is a special case of property elicitation, to study (in)consistency of this surrogate for structured binary classification.

\subsection{The embedding framework}

We will lean heavily on the embedding framework of~\citet{finocchiaro2019embedding,finocchiaro2022embedding}.
Given a discrete target loss, an embedding maps target reports into $\reals^k$, and observes a surrogate loss $L: \reals^k \times \Y \to \reals_+$ which behaves the same as the target on the embedded points.
The authors show that every polyhedral surrogate embeds some discrete loss, and show embedding implies consistency.
To define embeddings, we first need a notion of representative sets, which allows one to ignore some target reports that are in some sense redundant.
\begin{definition}\label{def:representative-set}
	We say $\Sc \subseteq \R$ is \emph{representative} with respect to the loss $L$ if we have $\argmin_u L(u;p) \cap \Sc \neq \emptyset$ for all $p\in \simplex$.
\end{definition}

\begin{definition}[Embedding]\label{def:loss-embed}
	The loss $L:\reals^d \times \Y \to\reals_+$ \emph{embeds} a loss $\ell:\R \times \Y\to\reals_+$ if there exists a representative set $\Sc$ for $\ell$ and an injective embedding $\varphi:\Sc\to\reals^d$ such that
	(i) for all $r\in\Sc$ and $y \in \Y$ we have $L(\varphi(r),y) = \ell(r,y)$, and (ii) for all $p\in\simplex,r\in\Sc$ we have
	\begin{equation}\label{eq:embed-loss}
	r \in \prop{\ell}(p) \iff \varphi(r) \in \prop{L}(p)~.
	\end{equation}
\end{definition}

Embeddings are intimately tied to polyhedral losses as they have finite representative sets.
Every discrete loss is embedded by some polyhedral loss.
A central tool of this work, however, is the converse: every polyhedral loss embeds some discrete target loss: namely, its restriction to a finite representative set.

\begin{theorem}[{\citep[Thm.\ 3, Prop.\ 1]{finocchiaro2022embedding}}]\label{thm:poly-embeds-discrete}
  A loss $L$ with a finite representative set $\Sc$ embeds $L|_\Sc$.
  Moreover, every polyhedral $L$ has a finite representative set.
\end{theorem}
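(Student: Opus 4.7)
The plan is to verify the two parts of the statement separately. For the first part, given a finite representative set $\Sc \subseteq \reals^d$ for $L$, I would take the inclusion $\varphi : \Sc \hookrightarrow \reals^d$ as the embedding. Property (i) of Definition~\ref{def:loss-embed} is immediate from the definition of restriction: $L(\varphi(r), y) = L(r, y) = L|_\Sc(r, y)$. For property (ii), observe that $\Sc$ being representative for $L$ means $\min_{s \in \Sc} L(s; p) = \inf_{u \in \reals^d} L(u; p)$ for every $p \in \simplex$, so an $r \in \Sc$ minimizes $L(\cdot; p)$ over $\Sc$ if and only if it minimizes $L(\cdot; p)$ over all of $\reals^d$. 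This gives exactly $r \in \prop{L|_\Sc}(p) \iff r \in \prop{L}(p)$. Note finally that $\Sc$ is trivially representative for $L|_\Sc$, being its entire report space.

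For the second part, I would exploit the piecewise-affine structure. Writing each $L(\cdot, y) = \max_{j \in J_y} (\inprod{a^y_j}{\cdot} + b^y_j)$ for finite index sets $J_y$, I would take the common refinement over $y \in \Y$ and $j, j' \in J_y$ of the arrangement of hyperplanes $\{u : \inprod{a^y_j - a^y_{j'}}{u} + b^y_j - b^y_{j'} = 0\}$, producing a finite polyhedral complex $\mathcal{C}$ of $\reals^d$ on which every $L(\cdot, y)$, and hence every $L(\cdot; p)$, is affine on each cell. Picking one representative $s_C \in C$ from each cell $C \in \mathcal{C}$ and setting $\Sc := \{s_C\}_{C \in \mathcal{C}}$ yields a finite candidate set.

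To show $\Sc$ is representative, fix $p \in \simplex$. Since $L \geq 0$, the polyhedral convex function $L(\cdot; p)$ is bounded below, and by the standard fact that a polyhedral convex function bounded below attains its infimum, the minimizer set $M := \argmin_u L(u; p)$ is nonempty. For any $u^* \in M$, let $C^*$ be the unique cell of $\mathcal{C}$ with $u^* \in \relint(C^*)$. Because $L(\cdot; p)|_{C^*}$ is affine and attains an interior minimum at $u^*$, it must be constant on $C^*$, which forces $C^* \subseteq M$ and hence $s_{C^*} \in \Sc \cap M$. The main obstacle I anticipate is making the polyhedral-complex construction precise enough that cells form a genuine complex (disjoint relative interiors and closure under faces) and handling unbounded cells; once that bookkeeping is in place, the ``affine function attains interior minimum implies constant'' step closes the argument cleanly.
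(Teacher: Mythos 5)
This theorem is imported from \citet{finocchiaro2022embedding} and the present paper gives no proof of it, so there is nothing in-paper to compare against; your proposal is a from-scratch reconstruction, and it is correct. Part one is exactly the intended argument: with $\varphi$ the inclusion, condition (i) of Definition~\ref{def:loss-embed} is the definition of restriction, and condition (ii) follows because representativeness forces the minimum of $L(\cdot;p)$ over $\Sc$ to equal the attained global minimum, so minimizing over $\Sc$ and over $\reals^d$ pick out the same elements of $\Sc$; and $\Sc$ is indeed trivially representative for $L|_\Sc$ as its full report space. Part two is the standard argument for polyhedral losses: the common hyperplane arrangement has finitely many faces, $L(\cdot;p)$ is affine on each closed face for every $p$ simultaneously, a nonnegative polyhedral convex function attains its minimum, and an affine function minimized at a relative-interior point of a face is constant on that face, so the face's chosen representative is also a minimizer. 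The only bookkeeping you flag—that the faces of a finite arrangement form a finite complex with each point in the relative interior of exactly one face, and that affineness on open cells extends to their closures by continuity—is standard and closes the argument.
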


A central contribution of the embedding framework is to simplify proofs of consistency.
In particular, if a surrogate $L:\reals^k \times \Y \to \reals_+$ embeds a discrete target $\ell:\R\times \Y\to\reals_+$, then there exists a calibrated link function $\psi:\reals^k\to\R$ such that $(L,\psi)$ is consistent with respect to $\ell$.
The proof is constructive, via the notion of \emph{separated} link functions, a fact we will make use of in \S~\ref{sec:constructing-link}; specifically, see Theorem~\ref{thm:embed-implies-consistent}.

\section{Lov\'asz hinge embeds the structured abstain problem}
\label{sec:our-embedding}

As the Lov\'asz hinge is a polyhedral surrogate, Theorem~\ref{thm:poly-embeds-discrete} states that it embeds some discrete loss, which may or may not be the same as the intended target $\ell^f$.
As we saw in \S~\ref{sec:running-examples}, one special case, $L^\fzo$, reduces to the BEP surrogate for multiclass classification with an abstain option, which implies that $L^f$ cannot embed $\ell^f$ in general.
In particular, whatever $L^f$ embeds, it must allow the algorithm to abstain in some sense.
We formalize this intuition by showing $L^f$ embeds the discrete loss $\ellabs$, a variant of structured binary classification which allows abstention on any subset of the $k$ labels.
See \S~\ref{app:omitted-proofs} for all omitted proofs.

\subsection{The filled hypercube is representative}

As a first step, we show that the filled hypercube $R := [-1,1]^k$ is representative for $L^f$, and use this fact to later find a \emph{finite} representative set for $L^f$ and apply Theorem~\ref{thm:poly-embeds-discrete}.
In fact, we show the following stronger statement: surrogate reports outside the filled hypercube $[-1,1]^k$ are dominated on each outcome.

\ignore{\begin{lemmac}\label{lem:lovasz-hypercube-dominates}\end{lemmac}}
\begin{restatable}{lemmac}{lovaszhypercubedominates}
  \label{lem:lovasz-hypercube-dominates}
  For any $u\in\reals^k$, we have $L^f(\boxed u,y) \leq L^f(u,y)$ for all $y\in\Y$.
\end{restatable}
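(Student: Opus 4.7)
The plan is to factor the inequality through the inner hinge-like vector $v(u,y) := (\ones - u\odot y)_+$, showing (i) clipping $u$ to $\boxed u$ only weakly shrinks $v(u,y)$ coordinatewise, and (ii) the Lov\'asz extension $F$ is monotone on $\reals^k_+$ when $f$ is increasing. Since $L^f(u,y) = F(v(u,y))$ by definition, these two facts combine to give $L^f(\boxed u,y) \le L^f(u,y)$.

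For step (i), I would argue coordinatewise. When $|u_i|\le 1$, $\boxed u_i = u_i$ and there is nothing to check. When $u_i y_i \ge 1$ (so $|u_i|\ge 1$ and $\sign(u_i)=y_i$), we have $\boxed u_i = \sign(u_i) = y_i$, so $\boxed u_i y_i = 1$ and both $v(\boxed u,y)_i$ and $v(u,y)_i$ equal $0$. The remaining case is $u_i y_i < -1$ (so $|u_i|>1$ and $\sign(u_i)=-y_i$): then $\boxed u_i y_i = -1$, giving $v(\boxed u,y)_i = 2$, whereas $v(u,y)_i = 1 - u_i y_i = 1 + |u_i| > 2$. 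In every case $v(\boxed u,y)_i \le v(u,y)_i$.

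For step (ii), I would invoke the formula in eq.~\eqref{eq:lovasz-ext}: for any $\pi\in\Sc_k$ and any $i$, the coefficient $f(\Spi{i})-f(\Spi{i-1})$ is nonnegative because $f\in\F_k$ is increasing and $\Spi{i-1}\subsetneq\Spi{i}$. Hence if $0 \le x \le x'$ coordinatewise, then picking any $\pi$ that achieves the max defining $F(x)$ yields
\[
F(x) = \sum_{i=1}^k x_{\pi_i}\bigl(f(\Spi{i})-f(\Spi{i-1})\bigr) \le \sum_{i=1}^k x'_{\pi_i}\bigl(f(\Spi{i})-f(\Spi{i-1})\bigr) \le F(x'),
\]
the last inequality by taking the max over permutations. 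Applied to $x = v(\boxed u,y)$ and $x' = v(u,y)$, both of which lie in $\reals^k_+$, this concludes the proof.

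The main (mild) obstacle is bookkeeping in the coordinatewise case analysis of step (i): one must track the interaction between $\sign(u_i)$, $y_i$, and the magnitude $|u_i|$ to see that clipping never \emph{increases} the hinge coordinate. Monotonicity of $F$ in step (ii) is essentially automatic from the assumption that $f$ is increasing, so the argument should be short.
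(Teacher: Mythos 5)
Your proposal is correct and follows essentially the same route as the paper: the same coordinatewise case analysis showing that clipping only weakly shrinks the hinge vector (the paper phrases this as $\boxed w_+ = \min(w_+,2)$ with $w = \ones - u\odot y$), followed by monotonicity of $F$ coming from $f$ being increasing. The only difference is cosmetic: the paper picks one permutation that orders both $w_+$ and $\boxed w_+$ and bounds the difference term by term, while you prove general monotonicity of $F$ on $\reals^k_+$ from the max formulation, which is equally valid and does not require a common ordering permutation.
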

\noindent
Using this result, we may now simplify the Lov\'asz hinge.
When $u\in[-1,1]^k$, we simply have
\begin{equation}\label{eq:lovasz-hinge-simplified}
  L^f|_{R}(u,y) = F(\ones - u \odot y)~,
\end{equation}
as $\ones - u \odot y$ is nonnegative.

\subsection{Affine decomposition of $L^f$}\label{sec:affine-decomposition}

We now give an affine decomposition of $L^f$ on $[-1,1]^k$, which we use throughout.
Recall that for any $\pi\in\Sc_k$ we define $P_\pi = \{x\in\reals^k_+ \mid x_{\pi_1} \geq \cdots \geq x_{\pi_k}\}$.
Letting $V_{\pi} = \{\onespi{\pi}{i} \mid i \in \{0,\ldots,k\}\} \subset \V$,
we have $P_\pi = \cone V_\pi$, the conic hull of $V_\pi$, meaning every $x \in P_\pi$ can be written as a conic combination of elements of $V_\pi$.
For all $i\in\{0,\ldots,k\}$, define the coefficients $\alpha_i : \reals^k_+ \to \reals$ as follows.
For any $x\in\reals^k_+$, define $\alpha_0(x) = 1-x_{[1]} \in \reals$, $\alpha_k(x) = x_{[k]} \geq 0$, and $\alpha_i(x) = x_{[i]} - x_{[i+1]} \geq 0$ for $i\in\{1,\ldots,k-1\}$.
Then
\begin{align}
x
&= \sum_{i=1}^{k} \alpha_i(x) \, \onespi{\pi}{i}
= \sum_{i=0}^{k} \alpha_i(x) \, \onespi{\pi}{i}
~,
\label{eq:alpha-combination}
\end{align}
where we recall that $\ones_{\pi,0} = \vec 0 \in\reals^k$.
We have $\alpha_i(x) \geq 0$ for all $i \in \{1,\ldots,k\}$, so the first equality gives the conic combination.
In the case $x_{[1]} \leq 1$, we have $\alpha_i(x) \geq 0$ for all $i\in\{0,\ldots,k\}$.
Since $\sum_{i=0}^k \alpha_i(x) = 1$, in that case the latter equality in eq.~\eqref{eq:alpha-combination} is a convex combination.
This yields $P_\pi \cap [0,1]^k = \conv V_\pi$.

It is clear from eq.~\eqref{eq:lovasz-ext-pi-u} that $F$ is affine on $P_\pi$ for each $\pi\in\Sc_k$.
We now identify the regions within $[-1,1]^k$ where $L^f(\cdot,y)$ is affine simultaneously for all outcomes $y\in\Y$, using these polyhedra and symmetry in $y$.

Motivated by the above, for any $y\in\Y$ and $\pi\in\Sc_k$, define
\begin{align}
	\label{eq:V-pi}
  V_{\pi,y} &= V_\pi \odot y = \{\onespi{\pi}{i} \odot y \mid i \in \{0,\ldots,k\}\} \subset \V~,
  \\
  \label{eq:P-pi}
  P_{\pi,y} &= \conv (V_{\pi,y}) = \conv (V_\pi) \odot y \subset [-1,1]^k~.
\end{align}
Since $V_{\pi,y}$ is a set of affinely independent vectors, each $P_{\pi,y}$ is a simplex.
Observe that for the case $y=\ones$, we have $P_{\pi,\ones} = P_\pi \cap [0,1]^k$.
Indeed, the other $P_{\pi,y}$ sets are simply reflections of $P_{\pi,\ones}$, as we may write $P_{\pi,y} = P_{\pi,\ones} \odot y$.
We now show that these regions union to the filled hypercube $[-1,1]^k$, and $L^f(\cdot,y)$ is affine on $P_{\pi,y}$ for each $y \in \Y$.

\ignore{\begin{lemmac}\label{lem:p-pi-y}\end{lemmac}}
\begin{restatable}{lemmac}{ppiy}\label{lem:p-pi-y}
	The sets $P_{\pi,y}$ satisfy the following.
	\begin{enumerate}
		\item[(i)]  $\cup_{y\in\Y,\pi\in\Sc_k} P_{\pi,y} = [-1,1]^k$.
		\item[(ii)] For all $f \in \F_k$, $y,y' \in \Y$, and $\pi\in \Sc_k$, the function $L^f(\cdot,y')$ is affine on $P_{\pi,y}$.
	\end{enumerate}
\end{restatable}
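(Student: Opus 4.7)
The plan is to handle the two parts separately, with part (ii) being the main content.

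For part (i), the containment $P_{\pi,y} \subseteq [-1,1]^k$ is immediate since $V_{\pi,y} \subset \V \subseteq [-1,1]^k$ and $P_{\pi,y}$ is a convex hull. For the reverse inclusion, given any $u \in [-1,1]^k$, I will set $y = \signstar(u)$ so that $u\odot y = |u| \in [0,1]^k$, then pick any $\pi \in \Sc_k$ ordering $|u|$ as $|u|_{\pi_1} \geq \cdots \geq |u|_{\pi_k}$. Using the identification $P_\pi \cap [0,1]^k = \conv V_\pi$ noted in Section~\ref{sec:affine-decomposition}, this yields $|u|\in\conv V_\pi$, hence $u = |u|\odot y \in \conv V_\pi \odot y = P_{\pi,y}$.

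For part (ii), the starting point is to combine Lemma~\ref{lem:lovasz-symmetry} with equation~\eqref{eq:lovasz-hinge-simplified} to rewrite $L^f(u,y') = F(\ones - u\odot y')$ for every $u \in [-1,1]^k \supseteq P_{\pi,y}$. Since $u \mapsto \ones - u\odot y'$ is affine in $u$, and $F$ is affine on each $P_{\pi'}$ by equation~\eqref{eq:lovasz-ext-pi-u}, it suffices to exhibit a single permutation $\pi' \in \Sc_k$, depending only on $\pi, y, y'$, such that $\ones - u\odot y' \in P_{\pi'}$ for every $u \in P_{\pi,y}$. The composition of an affine map with an affine function on its image is affine, so such a $\pi'$ yields the desired affineness of $L^f(\cdot,y')$ on $P_{\pi,y}$.

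To construct $\pi'$, write $u = |u|\odot y$ with $|u| \in P_\pi \cap [0,1]^k$, so that $\ones - u\odot y' = \ones - |u|\odot z$, where $z := y\odot y'$. Let $Z^- = \{i : z_i = -1\}$ and $Z^+ = \{i : z_i = 1\}$. The entries of $\ones - |u|\odot z$ in $Z^-$ equal $1+|u|_i \in [1,2]$ and those in $Z^+$ equal $1-|u|_i \in [0,1]$, so every $Z^-$ coordinate dominates every $Z^+$ coordinate; moreover within each group the value is monotone in $|u|_i$ (increasing on $Z^-$, decreasing on $Z^+$). I therefore define $\pi'$ to list the indices of $Z^-$ in the order they appear in $\pi$ (largest $|u|_i$ first), followed by the indices of $Z^+$ in the reverse order they appear in $\pi$ (smallest $|u|_i$ first). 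Crucially, this recipe depends only on $\pi, y, y'$, so the same $\pi'$ works simultaneously for every $u \in P_{\pi,y}$.

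The main obstacle is spotting this construction of $\pi'$; once it is in hand, verifying $(\ones - |u|\odot z)_{\pi'_1} \geq \cdots \geq (\ones - |u|\odot z)_{\pi'_k} \geq 0$ is a routine consequence of the grouping and monotonicity above, with nonnegativity following from $|u|\odot z \in [-1,1]^k$. The subtle point is that $P_{\pi,y}$ may straddle many of the simplices $P_{\pi'}$ under other affine maps, yet the particular map $u\mapsto \ones - u\odot y'$ happens to send all of $P_{\pi,y}$ into a single such simplex, which is exactly what makes $L^f(\cdot,y')$ affine on each cell of the common refinement $\{P_{\pi,y}\}_{\pi,y}$.
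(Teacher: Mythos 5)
Your proposal is correct and follows essentially the same route as the paper: part (i) is the same convex-combination argument via $y=\signstar(u)$ and $P_\pi\cap[0,1]^k=\conv V_\pi$, and in part (ii) your permutation $\pi'$ (indices of $\{y\odot y'<0\}$ in $\pi$-order followed by the rest in reverse $\pi$-order) is exactly the paper's construction showing $\ones-u\odot y'\in P_{\pi'}$ for all $u\in P_{\pi,y}$. Your explicit remark that the $Z^-$ coordinates ($\geq 1$) dominate the $Z^+$ coordinates ($\leq 1$) and that $\pi'$ is independent of $u$ makes the verification slightly more explicit than the paper's wording, but it is the same proof.
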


\subsection{Embedding the structured abstain problem}

Leveraging the affine decomposition given above, we will now show that the finite set $\V = \{-1,0,1\}^k$ must be representative for $L^f$.
By Theorem~\ref{thm:poly-embeds-discrete}, it will then follow that $L^f$ embeds $\ellabs := L^f|_\V$.
As we describe below, we call $\ellabs$ the \emph{structured abstain problem} because the predictions $v\in\V$ allow one to ``abstain'' on an index $i$ by setting $v_i = 0$.
\ignore{\begin{lemmac}\label{lem:UcapR-union-faces}\end{lemmac}}
\begin{restatable}{lemmac}{UcapRunionfaces}\label{lem:UcapR-union-faces}
	Given a polyhedral loss function $L : \reals^k \times \Y \to \reals_+$, let $\C$ be a collection of polyhedral subsets of $\reals^k$ such that for all $y\in\Y$, $L(\cdot,y)$ is affine on each $C_i\in\C$, and denote $\faces(C_i)$ as the set of faces of $C_i$.
	Let $R = \cup \C$ be the union of these polyhedral subsets.
	Then for all $p\in\simplex$, $\prop{L}(p)\cap R = \cup \F$ for some $\F \subseteq \cup_{i} \faces(C_i)$.
\end{restatable}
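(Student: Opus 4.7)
The plan is to leverage the piecewise-affine structure on $R$. First I would observe that, since $L(\cdot,y)$ is affine on each $C_i \in \C$ for every $y \in \Y$, the expected loss $L(\cdot;p) = \sum_{y \in \Y} p_y L(\cdot,y)$ is, on each $C_i$, a convex combination of affine functions and therefore itself affine on $C_i$. Because $L$ is polyhedral it is minimizable, so $m^\ast := \inf_{u \in \reals^k} L(u;p)$ is attained and $\prop{L}(p) = \{u \in \reals^k : L(u;p) = m^\ast\}$.

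If $\prop{L}(p) \cap R = \emptyset$, I would simply set $\F = \emptyset$; otherwise, for each $C_i \in \C$ I would define $M_i := \{u \in C_i : L(u;p) = m^\ast\}$, so that $\prop{L}(p) \cap R = \bigcup_i M_i$ since $R = \cup \C$. The key step is to argue that every nonempty $M_i$ is a face of $C_i$. Writing $L(u;p) = a_i^\top u + b_i$ for $u \in C_i$, and using that $L(\cdot;p) \geq m^\ast$ globally (hence on $C_i$), the hyperplane $H_i := \{u : a_i^\top u = m^\ast - b_i\}$ supports $C_i$, so $M_i = C_i \cap H_i$ is a face of $C_i$. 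Taking $\F := \{M_i : M_i \neq \emptyset\}$ then yields a subcollection of $\bigcup_i \faces(C_i)$ whose union equals $\prop{L}(p) \cap R$.

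The main obstacle is conceptually minor but worth handling carefully: the degenerate case $a_i = 0$ (where $L(\cdot;p)$ is constant on $C_i$) and the case $M_i = \emptyset$ (where $C_i$ contains no global minimizer) both require explicit treatment, so that $\F$ consists only of genuine faces and the claimed identity holds. In the former case $M_i$ is either empty or the improper face $C_i$, and in the latter it is simply excluded from $\F$. Beyond this, everything reduces to standard convex-analytic facts: convex combinations of affine functions are affine, and the minimizer set of an affine function on a polyhedron is a face whenever the minimum is attained.
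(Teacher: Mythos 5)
Your proof is correct, and it takes a genuinely different route from the paper's. You work with the expected loss directly: since $L(\cdot;p)$ is affine on each $C_i$, the set $M_i = \{u \in C_i : L(u;p) = m^\ast\}$ of global minimizers inside $C_i$ is, when nonempty, the intersection of $C_i$ with a supporting hyperplane (using $L(\cdot;p) \geq m^\ast$ everywhere), hence an exposed face, with the degenerate case $a_i = 0$ giving the improper face $C_i$; the union of these faces over $i$ is exactly $\prop{L}(p)\cap R$ because $R = \cup\,\C$. The paper instead argues pointwise via subdifferentials: for each optimal $u \in C_j$ it takes the unique face $F_j$ with $u \in \relint(F_j)$ and invokes an auxiliary lemma (Lemma~\ref{lem:subgradients-on-relint-affine}, that $\partial L(x;p) \subseteq \partial L(z;p)$ for $x\in\relint(C)$, $z\in C$, when $L(\cdot;p)$ is affine on $C$) together with the optimality condition $0\in\partial L(u;p)$ to conclude $F_j \subseteq \prop{L}(p)$, then unions these faces. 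Your supporting-hyperplane argument is more elementary and shorter, selecting a single (maximal, exposed) face per cell and avoiding subgradient machinery entirely; the paper's argument yields a slightly finer structural fact along the way (any face of a cell whose relative interior meets the optimal set lies entirely in it), but for the lemma as stated both arguments deliver the same conclusion, and your handling of the two degenerate cases ($a_i=0$, and $M_i=\emptyset$, plus noting $C_i\in\faces(C_i)$ as the improper face, which the paper's proof also implicitly uses) closes the only gaps that needed care.
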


\begin{proposition}\label{prop:lovasz-V-rep}
  The set $\V=\{-1,0,1 \}^k$ is representative for $L^f$.
\end{proposition}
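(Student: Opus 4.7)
The plan is to combine the three preceding lemmas: Lemma~\ref{lem:lovasz-hypercube-dominates} reduces the problem to finding a minimizer inside the filled hypercube; Lemma~\ref{lem:p-pi-y} says the hypercube is covered by the simplices $P_{\pi,y}$ on which $L^f(\cdot,y')$ is simultaneously affine in $y'$; and Lemma~\ref{lem:UcapR-union-faces} then forces any minimizer set to be a union of faces of these simplices. Since the vertices of $P_{\pi,y}$ lie in $V_{\pi,y} \subseteq \V$, a vertex of such a face yields a minimizer in $\V$.

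\medskip

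First I would argue that $L^f$ is minimizable and that $\prop{L^f}(p) \cap [-1,1]^k \neq \emptyset$ for every $p \in \simplex$. Because $L^f$ is polyhedral, hence continuous, and $[-1,1]^k$ is compact, the function $u \mapsto L^f(u;p) = \E_{Y\sim p} L^f(u,Y)$ attains its infimum on $[-1,1]^k$ at some point $u^*$. By Lemma~\ref{lem:lovasz-hypercube-dominates}, $L^f(\boxed u, y) \leq L^f(u,y)$ for all $y$, so $L^f(\boxed u; p) \leq L^f(u; p)$ for all $u\in\reals^k$ and all $p$; hence $u^*$ also minimizes $L^f(\,\cdot\,;p)$ over all of $\reals^k$. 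In particular $u^* \in \prop{L^f}(p) \cap [-1,1]^k$.

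\medskip

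Next I would invoke Lemma~\ref{lem:UcapR-union-faces} with the collection $\C = \{P_{\pi,y} \mid \pi\in\Sc_k,\, y\in\Y\}$. By Lemma~\ref{lem:p-pi-y}(i) the union is precisely $R = [-1,1]^k$, and by part (ii) the function $L^f(\cdot,y')$ is affine on each $P_{\pi,y}$ for every $y'\in\Y$; taking expectations, so is $L^f(\cdot;p)$. Lemma~\ref{lem:UcapR-union-faces} therefore gives $\prop{L^f}(p) \cap [-1,1]^k = \cup \F$ for some collection $\F$ of faces of the simplices $P_{\pi,y}$. By the previous paragraph $\F$ is nonempty, so pick any face $F \in \F$. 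Since each $P_{\pi,y}$ is the convex hull of the affinely independent set $V_{\pi,y}$, it is a simplex, and $F$ is the convex hull of a subset of its vertices $V_{\pi,y}\subseteq \V$. Any such vertex $v$ belongs to $F \subseteq \prop{L^f}(p)$, so $v \in \prop{L^f}(p) \cap \V$, proving $\V$ is representative.

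\medskip

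The main thing to be careful about is the book-keeping: verifying that the affinity of $L^f(\cdot,y)$ on each $P_{\pi,y'}$ (rather than only on $P_{\pi,\ones}$) transfers to affinity of the expectation $L^f(\cdot;p)$ on each single cell, so that Lemma~\ref{lem:UcapR-union-faces} applies. But this is exactly what Lemma~\ref{lem:p-pi-y}(ii) is engineered to deliver, so no further work beyond citing it should be required.
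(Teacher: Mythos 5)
Your proposal is correct and follows essentially the same route as the paper: cover $[-1,1]^k$ by the simplices $P_{\pi,y}$ via Lemma~\ref{lem:p-pi-y}, apply Lemma~\ref{lem:UcapR-union-faces} to conclude $\prop{L^f}(p)\cap[-1,1]^k$ is a union of faces, and note each face has vertices in $\V$. Your explicit verification that $\prop{L^f}(p)\cap[-1,1]^k\neq\emptyset$ (via compactness and Lemma~\ref{lem:lovasz-hypercube-dominates}) is a small point the paper's proof leaves implicit, and is a welcome addition rather than a deviation.
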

\begin{proof}
    Let $\C=\{P_{\pi,y} |\forall \; \pi \in \Sc_k,y\in\Y\}$ and $R=\cup \C=\cup_{\pi\in\Sc_k ,y\in\Y}P_{\pi ,y}=[-1,1]^k$ by Lemma~\ref{lem:p-pi-y}(i).
    Since every $P_{\pi ,y}$ is affine w.r.t $L^f$ according to Lemma~\ref{lem:p-pi-y}(ii), we have by Lemma~\ref{lem:UcapR-union-faces} $\forall p\in\simplex$, $\prop{L}(p)\cap R = \cup F$ where $\F \subseteq \cup_{\pi,y} \faces(P_{\pi,y})$. Yet, by the construction of $P_{\pi,y}$, every face contains some number of vertices from $\V$. Therefore, $\forall$ $p\in\simplex$, $\prop{L}(p)\cap \V\neq \emptyset$ which by definition means that $\V$ is representative for $L^f$.
\end{proof}

\begin{theorem}
	\label{thm:lovasz-embeds}
	The Lov\'asz hinge $L^f$ embeds $\ellabs:\V\times\Y\to\reals_+$ given by
	\begin{equation}
    \label{eq:lovasz-embeds}
    \ellabs(v,y) = f(\{ v\odot y < 0 \}) + f(\{ v\odot y \leq 0 \})~.
	\end{equation}
\end{theorem}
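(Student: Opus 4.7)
The plan is to combine Proposition~\ref{prop:lovasz-V-rep} with Theorem~\ref{thm:poly-embeds-discrete} to reduce the statement to a direct calculation of $L^f(v,y)$ for $v\in\V$, $y\in\Y$. Since $\V$ is finite and representative for $L^f$, and $L^f$ is polyhedral, Theorem~\ref{thm:poly-embeds-discrete} immediately gives that $L^f$ embeds $L^f|_\V$. Thus it suffices to verify the closed-form identity
\[
L^f(v,y) \;=\; f(\{v\odot y < 0\}) + f(\{v\odot y \leq 0\})
\qquad \forall\, v\in\V,\, y\in\Y.
\]

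Because $\V \subset [-1,1]^k$, Lemma~\ref{lem:lovasz-hypercube-dominates} and eq.~\eqref{eq:lovasz-hinge-simplified} let me write $L^f(v,y) = F(\ones - v\odot y)$, where the argument is nonnegative. The key observation is that for $v \in \V$ and $y\in\Y$ we have $v\odot y \in \{-1,0,1\}^k$, so the vector $x := \ones - v\odot y$ takes values in $\{0,1,2\}^k$. Partition $[k]$ into $S_2 = \{x=2\} = \{v\odot y < 0\}$, $S_1 = \{x=1\} = \{v\odot y = 0\}$, and $S_0 = \{x=0\} = \{v\odot y > 0\}$, with cardinalities $a,b,k-a-b$ respectively.

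To evaluate $F(x)$, I would pick any permutation $\pi \in \Sc_k$ that lists the indices of $S_2$ first, then $S_1$, then $S_0$, so that $x \in P_\pi$. Applying eq.~\eqref{eq:lovasz-ext-pi-u} and Abel summation (using $f(\emptyset)=0$ from normalization), I get
\[
F(x) \;=\; x_{\pi_k} f(\{\pi_1,\dots,\pi_k\}) + \sum_{i=1}^{k-1}(x_{\pi_i}-x_{\pi_{i+1}})\, f(\{\pi_1,\dots,\pi_i\}).
\]
Under our chosen $\pi$, the consecutive differences $x_{\pi_i}-x_{\pi_{i+1}}$ vanish except at the block boundaries $i=a$ and $i=a+b$, where they equal $2-1=1$ and $1-0=1$; the trailing term $x_{\pi_k} f(\{\pi_1,\dots,\pi_k\})$ vanishes when $S_0\neq\emptyset$. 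The two surviving contributions are exactly $f(S_2)$ and $f(S_2\cup S_1)$, giving the claimed identity.

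The main (small) obstacle is the bookkeeping when one or more of $S_0,S_1,S_2$ is empty: if $b=0$ the gap at position $a$ is $2$ instead of $1$, contributing $2f(S_2)=f(S_2)+f(S_2\cup S_1)$; if $S_0=\emptyset$ the trailing term $x_{\pi_k}f([k])$ reappears and supplies the missing $f(S_2\cup S_1)=f([k])$ term; if $a=0$ the formula collapses to $f(S_1)=0+f(S_1)$. In every case the two summands coincide with $f(\{v\odot y<0\})$ and $f(\{v\odot y\leq 0\})$, completing the identification $L^f|_\V = \ellabs$.
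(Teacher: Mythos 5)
Your proposal is correct and takes essentially the same route as the paper: reduce via Proposition~\ref{prop:lovasz-V-rep} and Theorem~\ref{thm:poly-embeds-discrete} to identifying $L^f|_\V$, then evaluate $F(\ones - v\odot y)$ using that $\ones - v\odot y \in \{0,1,2\}^k$. The only difference is that where the paper invokes the level-set formula for the Lov\'asz extension from \citet[Prop.~3.1(h)]{bach2013learning}, you rederive it by choosing a compatible permutation and performing Abel summation with explicit edge-case bookkeeping, which is an equivalent (self-contained) way of getting the same identity.
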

\begin{proof}
  From Proposition~\ref{prop:lovasz-V-rep} and Theorem~\ref{thm:poly-embeds-discrete}, $L^f$ embeds $L^f|_\V$.
  It therefore remains only to establish the set-theoretic form of $L^f|_\V$ as the loss $\ellabs$ in eq.~\eqref{eq:lovasz-embeds}.

  Let $v\in\V,y\in\Y$ be given.
  We may write
  \begin{align*}
    \ones - v\odot y = 0\cdot\ones_{\{ v\odot y > 0 \}} + 1\cdot\ones_{\{ v\odot y = 0 \}} + 2\cdot\ones_{\{ v\odot y < 0 \}}~.
  \end{align*}
	Now combining eq.~\eqref{eq:lovasz-hinge-simplified} and \citet[Prop 3.1(h)]{bach2013learning}, we may therefore write
  \begin{align*}
    L^f(v,y)
    &= F(\ones - v\odot y)
    \\
    &= (2-1) f(\{ v\odot y < 0 \}) + (1-0) f(\{ v\odot y < 0 \}\cup\{ v\odot y = 0 \}) + 0 f([k])
    \\
    &= f(\{ v\odot y < 0 \}) + f(\{ v\odot y \leq 0 \})~,
  \end{align*}
  as was to be shown.
\end{proof}

We can interpret $\ellabs$ as a structured abstain problem, where the algorithm is allowed to abstain on a given prediction by giving a zero instead of $\pm 1$.
Specifically, we can say the algorithm abstains on the set of indices $A_v = \{ v = 0 \}$.

To make this interpretation more clear, let $r = \signstar(v)$, which is forced to choose a label $\pm 1$ for each zero prediction.
The corresponding set of mispredictions for fixed $y \in \Y$ would be $M^y = \{ r \odot y < 0 \}$.
We can rewrite eq.~\eqref{eq:lovasz-embeds} in terms of these sets as $\ellabs(v,y) = f(M^y\setminus A_v) + f(M^y\cup A_v)$.
Contrasting with $\ellabs(r,y) = 2 f(\{r \odot y < 0\}) = f(M^y) + f(M^y)$, the abstain option allows one to reduce loss in the first term at the expense of a sure loss in the second term.
Intuitively, when there is large uncertainty about the labels of a set of indices $A \subseteq [k]$, by submodularity the algorithm would prefer to abstain on $A$ than take a chance on predicting.

When relating to submodularity, we will often find it useful to rewrite the misprediction set $M^y$ above in terms of two sets of labels: $S_v = \{ \signstar(v) > 0 \}$ and $S_y = \{ y > 0 \}$.
Then $M^y = S_v \triangle S_y$, and thus
\begin{equation}
  \label{eq:lovasz-embeds-symdiff}
  \ellabs(v,y) = f(S_v\triangle S_y\setminus A_v) + f(S_v\triangle S_y\cup A_v)~,
\end{equation}
where $\triangle$ is the symmetric difference operator $S\triangle T := (S\setminus T) \cup (T\setminus S)$.
To avoid additional parentheses, throughout we assume $\triangle$ has operator precedence over $\setminus$, $\cap$, and $\cup$.

For $r\in\Y$, we have $\ellabs(r,y) = 2\ell^f(r,y)$, meaning $\ellabs$ matches (twice) $\ell^f$ on $\Y$.
Were the ``abstain'' reports $v \in \V \setminus \Y$ dominated, then we would indeed have consistency.
Following the above intuition, however, we can show that whenever $f$ is submodular but not modular, there are situations where abstaining is uniquely optimal (relative to $\V$), leading to inconsistency.

\section{Inconsistency for structured binary classification}
\label{sec:inconsistency}

Leveraging the embedded loss $\ellabs$, we now show that $L^f$ is inconsistent for its intended target $\ell^f$, except when $f$ is modular.
As the modular case is already well understood, under the name \emph{weighted Hamming loss} (\S~\ref{sec:running-examples}), this result essentially says that $L^f$ is inconsistent for all nontrivial cases.

As $L^f$ embeds $\ellabs$, to show inconsistency we may focus on reports $v \in \V \setminus \Y$, i.e., those that abstain on at least one index.
Intuitively, if such a report is ever optimal, then $L^f$ with the link $\signstar$ has a ``blind spot'' with respect to the indices in $A_v := \{ v = 0 \}$.
We can leverage this blind spot to ``fool'' $L^f$, by making it link to an incorrect report.
In particular, we will focus on the uniform distribution $\bar p$ on $\Y$, and perturb it slightly to find an optimal $L^f$ point $v\in\V$ which maps to a $\ell^f$ suboptimal report $\signstar(v)$.
In fact, we will show that one can always find such a point violating consistency, unless $f$ is modular.

Given our focus on the uniform distribution, the following definition will be useful: for any set function $f$, let $\bar f := 2^{-k} \sum_{S\subseteq [k]} f(S) \in\reals$.
The next two lemmas relate $\bar f$ and $f([k])$ to expected loss and modularity.
The proofs follow from summing the submodularity inequality over all possible subsets, and observing that at least one of them is strict when $f$ is non-modular.

\ignore{\begin{lemmac}\label{lem:2-bar-f}\end{lemmac}}
\begin{restatable}{lemmac}{twobarf}
	\label{lem:2-bar-f}
	For all $v \in \V$, $\ellabs(v;\bar p) \geq f([k])$.
 	For all $r \in \Y$, $\ellabs(r;\bar p) = 2\bar f$.
\end{restatable}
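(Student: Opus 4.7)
The plan is to handle the two parts separately, with the second being essentially immediate and the first requiring a careful pairing argument via submodularity.

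For the claim that $\ellabs(r;\bar p) = 2\bar f$ when $r \in \Y$, I would observe that since $r$ has no zero entries, $\{r \odot y < 0\} = \{r \odot y \leq 0\}$ for every $y \in \Y$, so eq.~\eqref{eq:lovasz-embeds-symdiff} reduces to $\ellabs(r,y) = 2f(S_r \triangle S_y)$. Under the uniform distribution $\bar p$, the random set $S_y$ is uniform on $2^{[k]}$, and $S \mapsto S_r \triangle S$ is a bijection on $2^{[k]}$, so averaging yields $2 \cdot 2^{-k} \sum_{S \subseteq [k]} f(S) = 2 \bar f$.

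For the first claim, I would set $A_v = \{v = 0\}$, $B_v = [k] \setminus A_v$, and $S_v = \{v > 0\} \subseteq B_v$. Since $S_v$ is disjoint from $A_v$, the decomposition $S_v \triangle S_y = [S_v \triangle (S_y \cap B_v)] \cup (S_y \cap A_v)$ together with eq.~\eqref{eq:lovasz-embeds-symdiff} gives
\begin{equation*}
  \ellabs(v,y) = f\bigl(S_v \triangle (S_y \cap B_v)\bigr) + f\bigl((S_v \triangle (S_y \cap B_v)) \cup A_v\bigr).
\end{equation*}
This depends on $y$ only through $S_y \cap B_v$. Under $\bar p$ this marginal is uniform on subsets of $B_v$, and since $S_v \subseteq B_v$ the map $U \mapsto S_v \triangle U$ is a bijection on subsets of $B_v$, so letting $T = S_v \triangle (S_y \cap B_v)$ I obtain
\begin{equation*}
  \ellabs(v; \bar p) = 2^{-|B_v|} \sum_{T \subseteq B_v} \bigl[f(T) + f(T \cup A_v)\bigr].
\end{equation*}

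The key step, which I expect to be the only nontrivial part, is the right pairing to activate submodularity. For each $T \subseteq B_v$, let $T^c := B_v \setminus T$. Then $T \cap (T^c \cup A_v) = \emptyset$ (since $T \cap T^c = \emptyset$ and $T \cap A_v = \emptyset$) and $T \cup (T^c \cup A_v) = [k]$, so submodularity plus normalization yields $f(T) + f(T^c \cup A_v) \geq f(\emptyset) + f([k]) = f([k])$. Summing over $T \subseteq B_v$, and using the relabeling $T \leftrightarrow T^c$ to get $\sum_{T} f(T^c \cup A_v) = \sum_{T} f(T \cup A_v)$, gives
\begin{equation*}
  \sum_{T \subseteq B_v} \bigl[f(T) + f(T \cup A_v)\bigr] \;\geq\; 2^{|B_v|} f([k]),
\end{equation*}
and dividing by $2^{|B_v|}$ finishes the proof. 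The only real content is recognizing that pairing $T$ with $T^c \cup A_v$ is precisely what makes the two arguments $\subseteq/\supseteq$-incomparable and forces the submodular inequality to produce $f([k])$; once that pairing is seen, the rest is bookkeeping.
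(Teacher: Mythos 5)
Your proof is correct and takes essentially the same route as the paper's: both reduce $\ellabs(v;\bar p)$ to $2^{-|B_v|}\sum_{T\subseteq B_v}\bigl[f(T)+f(T\cup A_v)\bigr]$ via the same change of variables, and both then conclude by submodularity plus normalization, handling the $r\in\Y$ case by the same averaging argument (the paper gets it as the special case $A_v=\emptyset$). The only difference is cosmetic: you pair $f(T)$ with $f((B_v\setminus T)\cup A_v)$ in a single submodular inequality, whereas the paper symmetrizes the sum and applies submodularity to $(T,\,B_v\setminus T)$ and $(T\cup A_v,\,(B_v\setminus T)\cup A_v)$ and then once more to $(B_v,\,A_v)$; your pairing is a slight streamlining of the same argument.
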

\vspace*{-15pt}
\ignore{\begin{lemmac}\label{lem:bar-f}\end{lemmac}}
\begin{restatable}{lemmac}{barf}
	\label{lem:bar-f}
	Let $f$ be submodular and normalized.
	Then $\bar f \geq f([k])/2$, and $\bar f = f([k])/2$ if and only if $f$ is modular.
\end{restatable}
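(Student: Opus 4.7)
My plan is to apply the submodular inequality to each complementary pair $(S, [k]\setminus S)$ and sum over all $2^k$ subsets. Specifically, submodularity applied to $S$ and $[k]\setminus S$ together with normalization gives
\[
f(S) + f([k]\setminus S) \;\geq\; f(S \cup ([k]\setminus S)) + f(S \cap ([k]\setminus S)) \;=\; f([k]) + f(\emptyset) \;=\; f([k])~.
\]
Summing over all $S \subseteq [k]$, the left-hand side becomes $2 \sum_{S} f(S) = 2^{k+1}\bar f$ (each value $f(S)$ is counted twice, once directly and once via its complement), while the right-hand side is $2^k f([k])$. Dividing by $2^{k+1}$ yields $\bar f \geq f([k])/2$.

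For the equality characterization, since the summed bound is obtained by adding nonnegative slacks, $\bar f = f([k])/2$ if and only if every complementary-pair inequality is tight, i.e., $f(S) + f([k]\setminus S) = f([k])$ for all $S \subseteq [k]$. The forward direction (modular implies tight) is then immediate: if $f$ is modular, then by definition $f(S) + f([k]\setminus S) = f(S \cup ([k]\setminus S)) + f(S \cap ([k]\setminus S)) = f([k]) + f(\emptyset) = f([k])$.

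The main obstacle is the converse: assuming that $f(S) + f([k]\setminus S) = f([k])$ holds for every $S$, I need to derive full modularity, not merely equality on complementary pairs. The idea is to apply submodularity once more, this time to the complements. For arbitrary $A, B \subseteq [k]$, submodularity gives $f([k]\setminus A) + f([k]\setminus B) \geq f([k]\setminus(A\cap B)) + f([k]\setminus(A\cup B))$, using $[k]\setminus A \cap [k]\setminus B = [k]\setminus(A\cup B)$ and $[k]\setminus A \cup [k]\setminus B = [k]\setminus(A\cap B)$. Substituting $f([k]\setminus X) = f([k]) - f(X)$ (which is exactly the assumed identity applied to $X$) and simplifying reverses the submodular inequality, yielding $f(A) + f(B) \leq f(A\cup B) + f(A\cap B)$. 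Combined with submodularity, equality holds for every pair $(A,B)$, so $f$ is modular, as required.
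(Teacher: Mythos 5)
Your proof is correct, and for the equality characterization it takes a genuinely different route from the paper. The inequality itself is obtained the same way in both arguments: summing the submodular inequality over complementary pairs $(S,[k]\setminus S)$ (the paper packages this inside its Lemma~\ref{lem:2-bar-f}, setting $A_v=\emptyset$ there). For the ``only if'' direction, the paper argues by contrapositive: non-modularity yields some $S$ and $i\in S$ with $f(S)-f(S\setminus\{i\})<f(\{i\})$, decreasing marginal gains then make the single complementary pair $\{i\},[k]\setminus\{i\}$ strictly slack, so $\bar f> f([k])/2$. You instead argue directly: equality forces every complementary-pair inequality to be tight, giving the identity $f([k]\setminus X)=f([k])-f(X)$ for all $X$, and substituting complements into the submodular inequality reverses it into supermodularity, hence equality for \emph{all} pairs $(A,B)$ and thus modularity in exactly the sense the paper defines it. Your duality argument is self-contained and avoids the (true but not entirely immediate) claim that non-modularity produces a strictly smaller singleton marginal, while the paper's route additionally exhibits the precise pair responsible for strictness and reuses the computation already done in Lemma~\ref{lem:2-bar-f}; neither proof needs the increasing assumption, consistent with the statement.
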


Typical proofs of inconsistency identify a particular pair of distributions $p,p'\in\Delta(\Y)$ for which the same surrogate report $u$ is optimal, yet two distinct target reports are uniquely optimal for each, $r$ for $p$ and $r'$ for $p'$.
As $u$ cannot link to both $r$ and $r'$, one concludes that the surrogate cannot be consistent.
We follow this same general approach, but face one additional hurdle: we wish to show inconsistency of $L^f$ for \emph{all} non-modular $f$ simultaneously.
In particular, the distributions $p,p'$ may need to depend on the choice $f$, so at first glance it may seem that such an argument would be quite complex.
We achieve a relatively straightforward analysis by defining $p,p'$ based on only a single parameter of $f$; the optimal surrogate report itself may be entirely governed by $f$, but will lead to inconsistency regardless.

The proof relies on a similar symmetry observation as Lemma~\ref{lem:lovasz-symmetry}, that $L^f(u\odot y',y\odot y') = L^f(u,y)$; in particular, $\prop{L^f}$ has the same symmetry.
For $p\in\Delta(\Y)$ and $r\in\Y$, define $p\odot r \in \Delta(\Y)$ by $(p\odot r)_y = p_{y\odot r}$.
\ignore{\begin{lemmac}\label{lem:lovasz-property-symmetry}\end{lemmac}}
\begin{restatable}{lemmac}{lovaszpropertysymmetry}\label{lem:lovasz-property-symmetry}
  For all $p\in\Delta(\Y)$ and $r\in\Y$, $\prop{L^f}(p\odot r) = \prop{L^f}(p)\odot r$.
\end{restatable}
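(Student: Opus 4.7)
The plan is to reduce the identity to a pointwise symmetry statement about the expected loss itself. Specifically, I would first show that for every $u\in\reals^k$, $p\in\Delta(\Y)$, and $r\in\Y$,
\begin{equation*}
L^f(u;\,p\odot r) = L^f(u\odot r;\,p),
\end{equation*}
and then take $\argmin$ on both sides. Once the expected-loss identity is in hand, the conclusion about $\prop{L^f}$ is immediate because $u\mapsto u\odot r$ is an involution on $\reals^k$ (since $r\in\Y$ and hence $r\odot r=\ones$).

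To establish the expected-loss identity, I would just unfold the definition of $p\odot r$ and apply Lemma~\ref{lem:lovasz-symmetry} with the free symmetry parameter specialized to $r$. Concretely, writing out $L^f(u;p\odot r)=\sum_{y\in\Y}(p\odot r)_y L^f(u,y)=\sum_{y\in\Y}p_{y\odot r}L^f(u,y)$, I would substitute $y'=y\odot r$ (which is a bijection on $\Y$ because $\odot r$ is an involution) to get $\sum_{y'\in\Y}p_{y'}L^f(u,y'\odot r)$. Now Lemma~\ref{lem:lovasz-symmetry} gives $L^f(u,y'\odot r)=L^f(u\odot r,(y'\odot r)\odot r)=L^f(u\odot r,y')$, and summing against $p_{y'}$ produces $L^f(u\odot r;p)$.

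Finally I would translate this into the claim about $\prop{L^f}$. By definition $\prop{L^f}(p\odot r)=\argmin_u L^f(u;p\odot r)=\argmin_u L^f(u\odot r;p)$. Since $u\mapsto u\odot r$ is a bijection on $\reals^k$, we have $u\in\argmin_u L^f(u\odot r;p)$ if and only if $u\odot r\in\argmin_{u'}L^f(u';p)=\prop{L^f}(p)$. Equivalently, $u\in\prop{L^f}(p)\odot r$, which is the desired set equality.

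There is no real obstacle here: the statement is essentially a change-of-variable calculation that exploits exactly the symmetry already packaged into Lemma~\ref{lem:lovasz-symmetry}. The only thing to be careful about is bookkeeping around involutions, namely that applying $\odot r$ twice is the identity so that the substitution $y'=y\odot r$ is valid and that $(\prop{L^f}(p)\odot r)\odot r=\prop{L^f}(p)$, which is what lets me identify the argmin set with the stated right-hand side.
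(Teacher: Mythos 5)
Your proposal is correct and follows essentially the same route as the paper's proof: unfold the definition of $p\odot r$, apply Lemma~\ref{lem:lovasz-symmetry} with symmetry parameter $r$, change variables $y'=y\odot r$, and pull the involution $u\mapsto u\odot r$ out of the $\argmin$. The only difference is organizational — you isolate the expected-loss identity $L^f(u;p\odot r)=L^f(u\odot r;p)$ as an intermediate step, whereas the paper carries out the same manipulations inside a single $\argmin$ chain.
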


\begin{theorem}
	Let $f$ be submodular, normalized, and increasing.
	Then $(L^f,\sign)$ is consistent if and only if $f$ is modular.
\end{theorem}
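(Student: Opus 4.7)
\emph{Modular direction.} If $f$ is modular, then $\ell^f$ is weighted Hamming loss and $L^f$ is weighted hinge (eq.~\eqref{eq:weighted-hamming}); consistency then follows from~\citep[Thm.~15]{gao2011consistency}.

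\emph{Non-modular direction.} The plan is to construct a distribution $p_\epsilon$ close to the uniform distribution $\bar p$ on $\Y$ at which calibration fails. By Lemmas~\ref{lem:2-bar-f} and~\ref{lem:bar-f}, every $r\in\Y$ has $\ellabs(r;\bar p)=2\bar f>f([k])$, while the all-abstain report $\vec 0$ achieves $\ellabs(\vec 0;\bar p)=f([k])$, so $V^*:=\prop{\ellabs}(\bar p)$ is nonempty and disjoint from $\Y$. Lemma~\ref{lem:lovasz-property-symmetry} (applied with $\bar p\odot r=\bar p$) together with $\V\odot r=\V$ makes $V^*$ invariant under the $\Y$-action, so it partitions by abstain set: $V^*=\bigcup_{A\in\A^*}\{v\in\V:\{v=0\}=A\}$ for some $\A^*\subseteq 2^{[k]}\setminus\{\emptyset\}$. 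For $A\in\A^*$ and $r^*\in\Y$, let $v^{A,r^*}$ denote the element with $(v^{A,r^*})_i=r^*_i$ on $A^c$ and $=0$ on $A$; applying~\eqref{eq:lovasz-embeds} gives $\ellabs(v^{A,r^*},r^*)=f(\emptyset)+f(A)=f(A)$, which monotonicity shows is minimal on its sub-orbit. Fix $A^*\in\argmin_{A\in\A^*}f(A)$.

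The main obstacle is the following \emph{key claim}: setting $I:=\{i:f(\{i\})=0\}$, we must have $A^*\cap I^c\neq\emptyset$. Suppose, for contradiction, $A^*\subseteq I$. Then the equality conditions needed for $\ellabs(v;\bar p)=f([k])$ in the submodular-pairing proof of Lemma~\ref{lem:2-bar-f} force both $f(A^*)+f(A^{*c})=f([k])$ and $f(B)+f(A^{*c}\setminus B)=f(A^{*c})$ for every $B\subseteq A^{*c}$. Combined with the submodular inequality, induction on $|B|$ yields modularity of $f|_{A^{*c}}$. The identity $f(S)=f(S\setminus I)$ (a consequence of $f(I)=0$ by submodularity and monotonicity) then gives $f(S)=\sum_{i\in S}f(\{i\})$ for all $S\subseteq[k]$, contradicting non-modularity.

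Granted the key claim, set $p_\epsilon:=(1-\epsilon)\bar p+\epsilon\,\delta_{r^*}$. For small enough $\epsilon>0$, continuity yields $\prop{\ell^f}(p_\epsilon)=\{r\in\Y:r|_{I^c}=r^*|_{I^c}\}$, while $v^{A^*,r^*}\in\prop{\ellabs}(p_\epsilon)\subseteq\prop{L^f}(p_\epsilon)$ via Theorem~\ref{thm:lovasz-embeds}. Now pick $r^*|_{A^{*c}}$ arbitrarily; this fixes $v^{A^*,r^*}$ and $\tau:=\signstar(v^{A^*,r^*})$, with $\tau|_{A^{*c}}=r^*|_{A^{*c}}$. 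Finally, pick $r^*|_{A^*}$ so that $r^*$ disagrees with $\tau$ on at least one coordinate in $A^*\cap I^c$---feasible since $|A^*\cap I^c|\geq 1$ by the key claim. Then $\tau|_{I^c}\neq r^*|_{I^c}$, hence $\signstar(v^{A^*,r^*})\notin\prop{\ell^f}(p_\epsilon)$, while $v^{A^*,r^*}$ still minimizes $L^f(\cdot;p_\epsilon)$. Thus calibration at $p_\epsilon$ fails, completing the proof.
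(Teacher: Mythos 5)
Your proposal is correct, and while it opens exactly as the paper does (uniform distribution $\bar p$, Lemmas~\ref{lem:2-bar-f} and~\ref{lem:bar-f} showing every $r\in\Y$ is strictly beaten by an abstaining report, then a perturbation toward a point mass), the mechanism for extracting a calibration violation is genuinely different. The paper first reduces, WLOG, to $f(\{i\})>0$ for all $i$ (discarding null coordinates), so that $\prop{\ell^f}(p^y)=\{y\}$ is a singleton, and then neutralizes the arbitrariness of the tie-breaking in $\signstar$ via the symmetry Lemma~\ref{lem:lovasz-property-symmetry}: the same abstaining $v$ is $L^f$-optimal for two distributions $p^y$ and $p^{y'}$ (with $y'$ flipping $y$ on $\{v=0\}$) whose unique target optima differ, so any tie-breaking fails on at least one. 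You instead keep the null set $I=\{i:f(\{i\})=0\}$ explicit, prove the structural ``key claim'' by tracking the equality conditions in the submodular-pairing proof of Lemma~\ref{lem:2-bar-f} (equality forces $f(T)+f(A^{*c}\setminus T)=f(A^{*c})$ for all $T$, hence modularity of $f$ on $A^{*c}$, hence modularity of $f$ if $A^*\subseteq I$), and then defeat the \emph{fixed} link by choosing $r^*$ on $A^*$ adversarially after inspecting $\signstar(v^{A^*,r^*})$ --- a single-distribution argument, which is legitimate precisely because the link is a fixed function. Your route buys a cleaner treatment of the null coordinates (the paper's ``discard $i$ and continue'' reduction is left informal) and a standalone structural fact about when abstention is optimal at $\bar p$; the paper's route buys brevity, an explicit $\epsilon$, no equality-case analysis, and a violation exhibited uniformly over all tie-breaking choices. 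Two sketched steps in your plan do need to be written out, and both go through: the induction that complementary additivity on $A^{*c}$ plus submodularity gives modularity of $f|_{A^{*c}}$, and the finite-gap argument that for small $\epsilon$ the minimizers of $\ellabs(\cdot;p_\epsilon)$ stay in $V^*$ and include $v^{A^*,r^*}$ (this is exactly where the choice $A^*\in\argmin_{A\in\A^*}f(A)$ is used, and note that your ``continuity'' step is in fact exact, since $\ell^f(\cdot;\bar p)\equiv\bar f$).
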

\begin{proof}
  When $f$ is modular, we may write $f=f_w$ for some $w\in\reals_+^k$.
  Here $L^{f_w}$ is weighted hinge loss (eq.~\eqref{eq:weighted-hamming}), which is known to be consistent for $\ell^{f_w}$, which is weighted Hamming loss~\citep[Theorem 15]{gao2011consistency}.
  (Briefly, for all $p\in\simplex$ the loss $L^{f_w}(\cdot;p)$ is linear on $[-1,1]^k$, so it is minimized at a vertex $r\in\Y$.
    Hence $\Y$ is representative, so Theorem~\ref{thm:poly-embeds-discrete} gives that $L^{f_w}$ embeds $L^{f_w}|_\Y = 2\ell^{f_w}$.
	Consistency follows from Theorem~\ref{thm:embed-implies-consistent}.)

	Now suppose $f$ is submodular but not modular.
	As $f$ is increasing, we will assume without loss of generality that $f(\{i\}) > 0$ for all $i\in [k]$, which is equivalent to $f(S) > 0$ for all $S\neq\emptyset$; otherwise, $f(T) = f(T\setminus\{i\})$ for all $T\subseteq [k]$, so discard $i$ from $[k]$ and continue.
	In particular, we have $\{\emptyset\} = \argmin_{S\subseteq [k]} f(S)$.

	Define $\epsilon = \bar f / (2\bar f - f([k]))$.
  We have $\epsilon > 0$ by Lemma~\ref{lem:bar-f} and submodularity of $f$.
	For any $y\in\Y$, let $p^y = (1-\epsilon) \bar p + \epsilon \delta_y$, where again $\bar p$ is the uniform distribution, and $\delta_y$ is the point distribution on $y$.

  First, for all $r\in\Y$ with $r\neq y$, we have $\{r\odot y < 0\} \neq \emptyset = \{y\odot y < 0\}$.
  Since $\{\emptyset\} = \argmin_{S\subseteq [k]} f(S)$, we have
  \begin{align*}
    \ell^f(r;p^y)
    &= (1-\epsilon) 2 \bar f + \epsilon \, 2f(\{r\odot y < 0\})\\
    &> (1-\epsilon)2 \bar f + \epsilon \, 2f(\{y\odot y < 0\})\\
    &=\ell^f(y;p^y)~,
  \end{align*}
  giving $\prop{\ell^f}(p^y) = \{y\}$.
  On the other hand, from Lemma~\ref{lem:2-bar-f} and the fact that $\ellabs$ agrees with $\ell^f$, we have for all $r\in\Y$,
	\begin{align*}
    \ellabs(r;p^y)
    \geq \ellabs(y;p^y)
    =(1-\epsilon)2 \bar f
    > f([k]) = \ellabs(0;p^y)~.
	\end{align*}
	We conclude there exists some optimal report $v \in \prop{\ellabs}(p^y) \setminus \Y$.
  By Theorem~\ref{thm:lovasz-embeds}, $v\in\prop{L^f}(p^y)$ as well.

  As $v\notin\Y$, in particular, $\{v = 0\} \neq \emptyset$.
  Now define $y'\in\Y$ to disagree with $y$ on $\{v = 0\}$; formally,
$y'_i = v_i$ if $v_i \neq 0$ and $y'_i = -y_i$ if $v_i = 0$.
  Although $y'\neq y$ (as $\{v=0\} \neq \emptyset$), we have by construction that $v \odot (y \odot y') = v$.
  Furthermore, $p^y \odot (y\odot y') = p^{y'}$.
  By Theorem~\ref{thm:lovasz-embeds} and Lemma~\ref{lem:lovasz-property-symmetry} then, $v \in \prop{L^f}(p^{y'})$.
  By the above, however, we also have $\{y'\} = \prop{\ell^f}(p^{y'})$.
  As $\signstar(v)$ cannot be both $y$ and $y'$, at least one of $p^y$ and $p^{y'}$ exhibits the inconsistency of $L^f$ for $\ell^f$.
  Specifically, calibration is violated (Definition~\ref{def:calibration}) as $v$ achieves the optimal $L^f$-loss for both $p^y$ and $p^{y'}$, but for at least one, links to a report not in $\prop{\ell^f}$.
\end{proof}

\section{Constructing a calibrated link for $\ellabs$}
\label{sec:constructing-link}

As $L^f$ embeds $\ellabs$ from Theorem~\ref{thm:lovasz-embeds}, Theorem~\ref{thm:embed-implies-consistent} below further implies $L^f$ is consistent with respect to $\ellabs$ for some link function.
Yet, the design of such a link function is not immediately clear.
Indeed, natural choices turn out to be inconsistent in general, such as the threshold link $\psi_c$ for $c > 0$ used by the BEP surrogate (\S~\ref{sec:running-examples}), which given by $(\psi_c(u))_i=0$ whenever $|u_i|<c$ and $(\psi_c(u))_i = \sign(u_i)$ otherwise (Figure~\ref{fig:k2-links}).
We instead follow the construction of an $\epsilon$-separated link from~\citet{finocchiaro2022embedding}, resulting in two consistent link functions.
Interestingly, while these links do not depend on $f$, they are calibrated with respect to $\ellabs$ for all $f \in \F_k$ simultaneously.
See \S~\ref{app:omitted-proofs} for omitted proofs.

\subsection{Approach via separated link functions}
\label{sec:separ-link-funct}

For any polyhedral loss $L$ which embeds a target discrete loss $\ell$, \citet{finocchiaro2022embedding} give a construction of a link function $\psi$ such that $(L,\psi)$ is calibrated with respect to $\ell$.
Their construction is based on $\epsilon$-separation, as follows.

\begin{definition}[{\citep[Construction 1]{finocchiaro2022embedding}}] \label{def:eps-thick-link}
	Let a polyhedral loss $L:\reals^d\times\Y\to\reals_+$ that embeds some discrete loss $\ell:\R\times\Y\to\reals_+$ be given, along with $\epsilon > 0$, and a norm $\|\cdot\|$.
  The \emph{$\epsilon$-thickened link envelope} $\Psi:\reals^d\toto\R$ is constructed as follows.
	Define $\U = \{\prop{L}(p) : p \in \simplex\}$ and, for each $U \in \U$, let $R_U = \{r \in \R : \varphi(r) \in U\}$, the reports whose embedding points are in $U$.
	Initialize by setting $\Psi(u) = \R$ for all $u\in\reals^d$.
	Then for each $U \in \U$, and all points $u$ such that $\inf_{u^* \in U} \|u^*-u\| < \epsilon$, update $\Psi(u) = \Psi(u) \cap R_U$.
\end{definition}

We say a link envelope $\Psi$ is nonempty pointwise if $\Psi(u) \neq \emptyset$ for all $u\in\reals^d$.
Similarly, a link function $\psi$ is pointwise contained in $\Psi$ if $\psi(u) \in \Psi(u)$ for all $u\in\reals^d$.
\ignore{\begin{theorem}\label{thm:embed-implies-consistent}\end{theorem}}
\begin{restatable}[{\citep[Theorems 5, 6]{finocchiaro2022embedding}}]{theoremc}{embedimpliesconsistent}\label{thm:embed-implies-consistent}
  Let $L:\reals^k \times \Y \to \reals_+$ embed a discrete target $\ell:\R\times \Y\to\reals_+$, and let $\Psi$ be defined as in Definition~\ref{def:eps-thick-link}.
  Then $\Psi$ is nonempty pointwise for all sufficiently small $\epsilon$.
  Furthermore, for any link function $\psi$ pointwise contained in $\Psi$, the pair $(L,\psi)$ is consistent with respect to $\ell$.
\end{restatable}

Essentially, 
this construction ``thickens'' each of potentially optimal set and ensures surrogate report that is close to these regions must be linked to a representative report contained in that set.
One can consider $\Psi$ the resulting ``link envelope'', from which a calibrated link may be arbitrarily chosen pointwise.

To apply this construction to the Lov\'asz hinge $L^f$, let $\Psi^f$ be the envelope $\Psi$ from Definition~\ref{def:eps-thick-link} applied to $L^f$.
We immediately encounter a complication: 
as the link envelope $\Psi^f$ depends on the choice of $f$, it is entirely possible that no single link function is contained in the envelopes $\Psi^f$ for all $f\in\F_k$, i.e., is simultaneously calibrated for $L^f$ for \emph{all} such $f$.
If no simultaneous link existed, the construction and analysis would have to be tailored carefully to each $f\in\F_k$.
Interestingly, we show that such a simultaneous link does exist.

To find a link which is calibrated for all $f$, we identify certain structure which is common to Lov\'asz hinges $L^f$.
We encode this structure in a common link envelope $\hat \Psi$, and then show in Proposition~\ref{prop:psi-containment} that, for all $f \in \F_k$ and $u \in \reals^k$, we have $\hat \Psi(u) \subseteq \Psi^f(u)$.
We then show that $\hat \Psi$ is nonempty for sufficiently small $\epsilon$, meaning it contains a link option pointwise.
This link is therefore contained in all the link envelopes $\Psi^f$ for all $f$, and hence is calibrated with respect to $\ellabs$ for all $f \in \F_k$ simultaneously.

\begin{figure}
\centering
	\begin{minipage}[b]{0.3\textwidth}
	\centering
	\includegraphics[width=\textwidth]{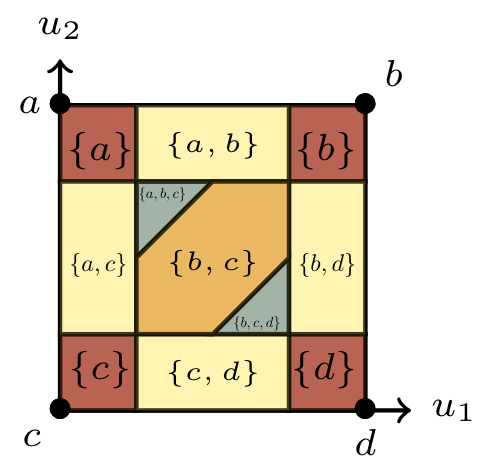}
\end{minipage}
\hfill
  \begin{minipage}[b]{0.3\textwidth}
    \includegraphics[width=\textwidth]{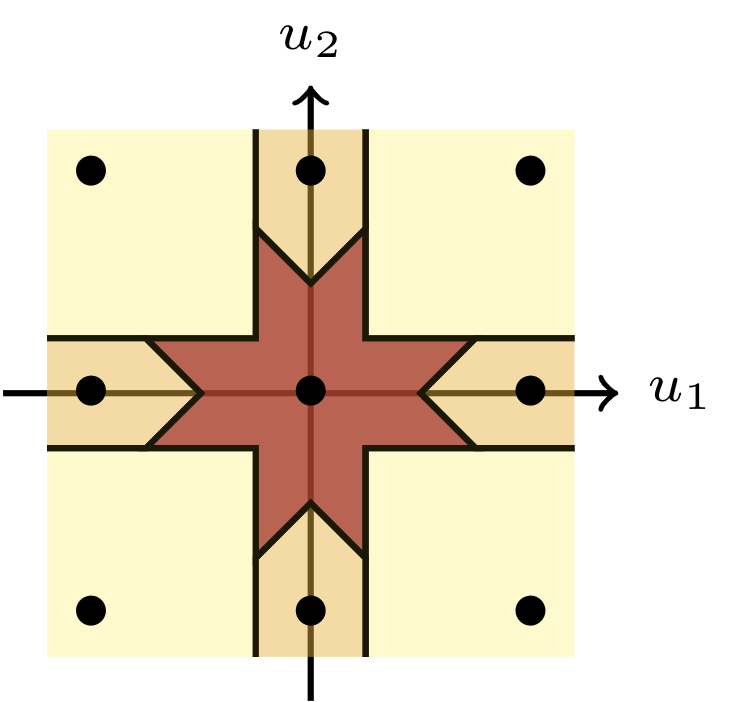}
  \end{minipage}
\hfill
  \begin{minipage}[b]{0.3\textwidth}
    \includegraphics[width=\textwidth]{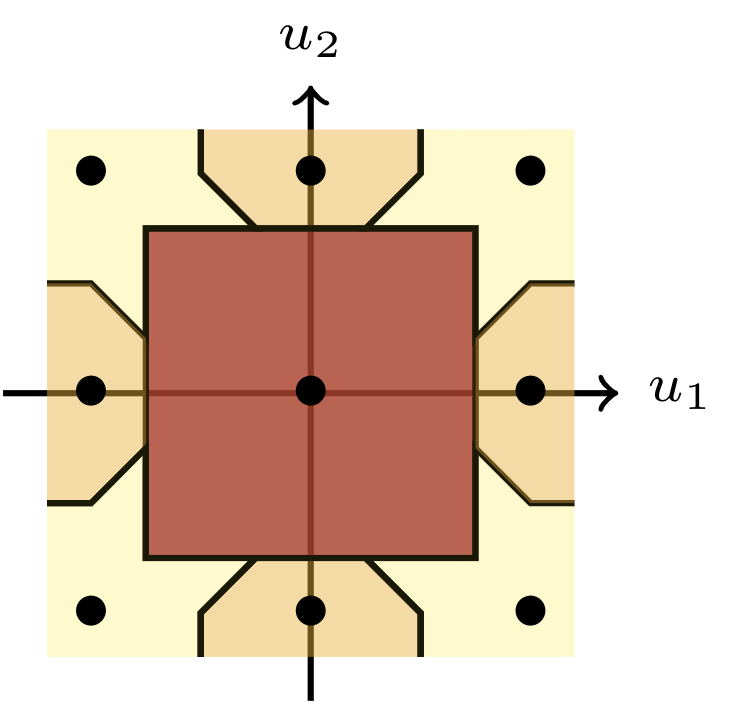}
  \end{minipage}
  \caption{The link envelope $\hat\Psi$ (left) and link functions $\psi^*_\epsilon$ (middle) and $\psi^\diamond_\epsilon$ (right) for $k=2$ and $\epsilon=\frac{1}{4}$.
    The envelope $\hat \Psi$ is pictured for $u \in \reals_+^2$, with each region labeled by the value of $\hat\Psi$; a link is calibrated if it always links to one of the nodes in the region.
    The values for the link functions $\psi^*_\epsilon$ and $\psi^\diamond_\epsilon$ are given by the unique point $v \in \V$ that each depicted region contains.
    In particular, both links satisfy the constraints from $\hat\Psi$ (left) and thus are calibrated.
    Interestingly, $\psi^*_\epsilon$ matches the link proposed for the BEP surrogate~\cite{ramaswamy2018consistent} if one additionally sends all abstain reports (with any 0 coordinate) to $\bot$.}
\label{fig:k2-links}
\end{figure}

\subsection{The common link envelope $\hat \Psi$}\label{sec:calibrated-via-separation}
We now present our link envelope $\hat\Psi$, used to construct calibrated links (Figure~\ref{fig:k2-links}, left).

\begin{definition}\label{def:Psi-candidates}
  Let $\Vfaces := \cup_{\pi \in \Sc_k, y \in \Y} 2^{V_{\pi,y}}$ be the subsets of $\V$ whose convex hulls are faces of some $P_{\pi,y}$ polytope.
	Define $\hat \Psi:\reals^k \to 2^\V$ by $\hat\Psi(u) = \cap \{ V \in \Vfaces \mid d_\infty (\conv V, \boxed u) < \epsilon\}$.
\end{definition}

Now we show that $\hat \Psi \subseteq \Psi^f$ pointwise.
The proof uses the fact that both $\hat \Psi(u)$ and $\Psi^f(u)$ are constructed by the intersections of sets, and shows that the sets generating $\hat \Psi(u)$ are subsets of those generating $\Psi^f(u)$ for all $f \in \F_k$.
In particular, every possible optimal set in the range of $\prop{L^f}$ is a union of faces generated by convex hulls of elements of $\Vfaces$.

\ignore{\begin{propositionc}\label{prop:psi-containment}\end{propositionc}}
\begin{restatable}{propositionc}{psicontainment}\label{prop:psi-containment}
  For all $f \in \F_k$ and $u\in\reals^k$, we have $\hat\Psi(u) \subseteq \Psi^f(u)$.
\end{restatable}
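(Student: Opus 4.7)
The plan is to show that every ``constraint'' defining $\Psi^f(u)$ is implied by some constraint defining $\hat\Psi(u)$. Unwinding Definition~\ref{def:eps-thick-link}, $\Psi^f(u)$ is the intersection of sets $U \cap \V$ over all optimal sets $U = \prop{L^f}(p)$ (with $p\in\simplex$) satisfying $d_\infty(u,U) < \epsilon$. So I would fix one such $U$ and try to produce some $V \in \Vfaces$ for which (a) $\conv V \subseteq U$ and (b) $d_\infty(\conv V, \boxed u) < \epsilon$. Together these give $V \subseteq U \cap \V$ and the fact that $V$ is among the constraints defining $\hat\Psi(u)$, so $\hat\Psi(u) \subseteq V \subseteq U \cap \V$; taking the intersection over all such $U$ would then yield the claim.

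To build the witness $V$, I would lean on the two structural lemmas from Section~\ref{sec:our-embedding}. First, by Lemma~\ref{lem:p-pi-y}, the simplices $P_{\pi,y'}$ cover $[-1,1]^k$ and $L^f(\cdot,y)$ is affine on each of them; so Lemma~\ref{lem:UcapR-union-faces} applied with $\C = \{P_{\pi,y'}\}$ shows that $U \cap [-1,1]^k$ is a union of faces of the $P_{\pi,y'}$. Each such face is of the form $\conv V_F$ for some $V_F \in \Vfaces$, since the vertex set of $P_{\pi,y'}$ is $V_{\pi,y'} \subseteq \V$. Second, I would pick $u^* \in U$ with $\|u^* - u\|_\infty < \epsilon$ and pull it into the hypercube: by Lemma~\ref{lem:lovasz-hypercube-dominates}, $L^f(\boxed{u^*}; p) \leq L^f(u^*; p) = \min_{u'} L^f(u'; p)$, so $\boxed{u^*} \in U \cap [-1,1]^k$; and because clipping to $[-1,1]^k$ is coordinate-wise and hence $1$-Lipschitz in $\ell_\infty$, we have $\|\boxed u - \boxed{u^*}\|_\infty \leq \|u - u^*\|_\infty < \epsilon$. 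Combining, $\boxed{u^*}$ lies in some face $\conv V_F$ with $V_F \in \Vfaces$ and $\conv V_F \subseteq U$, and $d_\infty(\conv V_F, \boxed u) < \epsilon$, which is the required witness.

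The main obstacle I expect is conceptual rather than computational: justifying why a single $f$-independent envelope $\hat\Psi$ can dominate the entire family of $f$-dependent envelopes $\Psi^f$. The optimal sets $U = \prop{L^f}(p)$ do depend on $f$, and can be unbounded or extend outside $[-1,1]^k$, so a priori one might fear a constraint of $\Psi^f$ with no analogue in $\Vfaces$. The key enabler is Lemma~\ref{lem:lovasz-hypercube-dominates}: it lets me transport any nearby witness $u^* \in U$ into the hypercube while only shrinking the $\ell_\infty$-distance to $\boxed u$. Once inside the hypercube, the common cell decomposition $\{P_{\pi,y'}\}$ takes over, and the face structure encoded by $\Vfaces$ suffices to match every constraint of $\Psi^f$. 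The remainder is routine bookkeeping about faces of simplices.
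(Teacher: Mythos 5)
Your proposal is correct and takes essentially the same route as the paper: for each optimal set $U$ within $\epsilon$ of $u$, you exhibit a face $\conv V$ with $V \in \Vfaces$, $\conv V \subseteq U \cap [-1,1]^k$, and $d_\infty(\conv V, \boxed u) < \epsilon$, using Lemma~\ref{lem:p-pi-y} and Lemma~\ref{lem:UcapR-union-faces}, and then intersect over all such $U$. The only cosmetic difference is that you inline the clipping step via $1$-Lipschitzness of coordinate-wise clipping together with Lemma~\ref{lem:lovasz-hypercube-dominates}, whereas the paper packages exactly this as Lemma~\ref{lem:truncated-u-also-close}.
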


We now characterize the link envelope $\hat \Psi(u)$ in terms of the coordinates of $u$.
In particular, $\hat \Psi(u)$ consists of the embedding points $v\in\V$ that make up the intersection of the faces from $\Vfaces$ that are $\epsilon$ close to $u$.
We can express these points in terms of the ordered elements of $|u|$.
In particular, such a point $v\in\V$ appears in the intersection exactly when the corresponding elements of $|u|$ are $2\epsilon$-far from each other, since otherwise we can find a face not containing $v$ which is $\epsilon$-close to $u$ (Proposition~\ref{prop:hat-psi-char}).
Therefore, $\Psi$ is always nonempty when $\epsilon$ is small enough to guarantee a gap of at least $2\epsilon$ in the ordered elements of $|u|$ (Lemma~\ref{lem:hat-Psi-nonempty}).

\begin{restatable}{propositionc}{hatpsichar}\label{prop:hat-psi-char}
  Let $u\in\reals^k$, and let $\pi\in\Sc_k$ order the elements of $|u|$ (descending).
  For the purposes of the following, define $|u_{\pi_0}| = 1+\epsilon$ and $|u_{\pi_{k+1}}| = -\epsilon$.
  Then we have
  \begin{equation}\label{eq:hat-psi-char}
    \hat\Psi(u) = \{\onespi{\pi}{i} \odot \signstar(u) \mid i\in\{0,1,\ldots,k\}, \; |u_{\pi_i}| \geq |u_{\pi_{i+1}}| + 2\epsilon \}
  \end{equation}
\end{restatable}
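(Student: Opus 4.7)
The plan is to reduce to a canonical frame by symmetry, observe that the full simplex $V_{\pi,\signstar(u)}$ is at distance $0$ from $\boxed u$ and hence lies in the intersecting collection, and then establish the gap characterization in each direction via (i) an explicit excluding face and (ii) a cumulative-mass argument.

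Since $\hat\Psi(u)$ depends on $u$ only through $\boxed u$, I first work with $u$ replaced by $\boxed u \in [-1,1]^k$. Using the symmetry $V_{\pi,y} = V_{\pi,\ones} \odot y$ (which transports compatibly through $\boxed u$, $\signstar(u)$, and $\hat\Psi$), I may WLOG set $\signstar(u) = \ones$ and, after relabeling coordinates, take $\pi = \mathrm{id}$, so $1 \geq u_1 \geq \cdots \geq u_k \geq 0$. Writing $v_i := \onespi{\mathrm{id}}{i}$, the claim becomes $v_i \in \hat\Psi(u) \iff u_i \geq u_{i+1} + 2\epsilon$ under the conventions $u_0 := 1+\epsilon$, $u_{k+1} := -\epsilon$. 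Because $u \in P_{\mathrm{id},\ones}$, the full simplex $V_{\mathrm{id},\ones} = \{v_0,\ldots,v_k\}$ lies in the intersecting collection, so $\hat\Psi(u) \subseteq \{v_0,\ldots,v_k\}$ and it suffices to classify each $v_i$.

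For the \emph{only if} direction (gap $< 2\epsilon$ excludes $v_i$), I exhibit a single excluding face $V' \in \Vfaces$ close to $u$ with $v_i \notin V'$. For interior $1 \leq i \leq k-1$, take $V' = V_{\mathrm{id},\ones} \setminus \{v_i\}$; starting from the standard decomposition $u = \sum_j \alpha_j v_j$ and redistributing the weight $\alpha_i = u_i - u_{i+1}$ of $v_i$ evenly between $v_{i-1}$ and $v_{i+1}$ yields a point of $\conv V'$ that perturbs $u$ only on coordinates $i,i{+}1$ by $\pm \alpha_i/2$, giving $d_\infty(\conv V', u) \leq \alpha_i/2 < \epsilon$. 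For $i = 0$, removing $v_0 = \vec 0$ forces the first coordinate of every point of $\conv V'$ to equal $1$, giving distance $1 - u_1 < \epsilon$; for $i = k$, removing $v_k = \ones$ forces the last coordinate to equal $0$, giving distance $u_k < \epsilon$. In all three cases the distance is $< \epsilon$ exactly when the gap condition fails.

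For the \emph{if} direction --- the main obstacle --- I show that $u_i \geq u_{i+1} + 2\epsilon$ forces every close face $V \subseteq V_{\pi',y'}$ to contain $v_i$. Write $z = \sum_j \lambda_j (\onespi{\pi'}{s_j} \odot y') \in \conv V$ with $\|z - u\|_\infty < \epsilon$ and introduce the cumulative mass $\mu_\ell := \sum_{j \,:\, s_j \geq \pi'^{-1}(\ell)} \lambda_j$, so that $z_\ell = \mu_\ell \, y'_\ell$ and $\mu_\ell$ is nonincreasing in the position $\pi'^{-1}(\ell)$. Coordinate-wise closeness plus the gap inequality then yield: for $\ell \leq i$, $u_\ell \geq u_i \geq 2\epsilon$ forces $y'_\ell = 1$ and $\mu_\ell > u_{i+1} + \epsilon$; for $\ell > i$, one checks $\mu_\ell < u_{i+1} + \epsilon$ in both sign subcases for $y'_\ell$ (this sign bookkeeping for coordinates with $u_\ell < \epsilon$ is the main subtlety). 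Monotonicity then forces $\{\pi'_1,\ldots,\pi'_i\} = \{1,\ldots,i\}$, and the strict drop $\mu_{\pi'_i} - \mu_{\pi'_{i+1}} = \sum_{j \,:\, s_j = i} \lambda_j > 0$ produces a positive-weight vertex $\onespi{\pi'}{i} \odot y' \in V$, whose support is exactly $\{1,\ldots,i\}$ with $y'_\ell = 1$, so this vertex coincides with $v_i$. The endpoint cases reduce to one-coordinate arguments: $v_0 \notin V$ forces every $s_j \geq 1$ and hence $|z_{\pi'_1}| = 1$, contradicting $u_1 \leq 1-\epsilon$; $v_k \notin V$ forces every $s_j \leq k-1$ and hence $z_{\pi'_k} = 0$, contradicting $u_k \geq \epsilon$.
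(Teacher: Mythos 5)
Your proposal is correct, and while its skeleton matches the paper's (reduce by symmetry to the nonnegative, descending-ordered case; exclude a vertex $v_i$ when the gap fails by redistributing the weight $\alpha_i=u_i-u_{i+1}$ onto the adjacent vertices, which is exactly the paper's coordinate-averaging construction), your \emph{if} direction is genuinely different and, in fact, more complete. The paper argues only inside the canonical frame: it shows $\alpha_i(u')=u'_i-u'_{i+1}>0$ for every $u'$ with $d_\infty(u,u')<\epsilon$ and then invokes the minimal-vertex-set lemma for $V^u_{\pi^*,y}$, leaving implicit why a close face drawn from a \emph{different} simplex $P_{\pi',y'}$ must also contain $\onespi{\pi^*}{i}$ --- which is the actual content of the claim, since $\hat\Psi(u)$ intersects over all of $\Vfaces$. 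Your cumulative-mass argument ($z_\ell=\mu_\ell y'_\ell$ with $\mu$ nonincreasing along $\pi'$, the threshold $u_{i+1}+\epsilon$ separating coordinates $\ell\le i$ from $\ell>i$, and the strict drop $\mu_{\pi'_i}-\mu_{\pi'_{i+1}}>0$ forcing the vertex with support $\{1,\dots,i\}$ into $V$) handles arbitrary frames $(\pi',y')$ explicitly, including the sign bookkeeping for small coordinates; this is the step the paper's proof glosses over, and your treatment buys a fully rigorous argument at the cost of more notation. Your preliminary observation that $\conv V_{\mathrm{id},\ones}$ itself is at distance $0$, so $\hat\Psi(u)\subseteq\{v_0,\dots,v_k\}$, is also a clean simplification the paper does not state.

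Two small points. First, in the $i=k$ endpoint you assert that $v_k\notin V$ forces every $s_j\le k-1$; this overlooks that $V$ could contain the full-support vertex $\onespi{\pi'}{k}\odot y'=y'\ne\ones$, which is not $v_k$. The case is excluded by the same sign argument you use elsewhere ($u_k\ge\epsilon$ forces every $z_\ell>0$, hence $y'_\ell=1$ wherever $\mu_\ell>0$), and indeed your general interior argument already covers $i=k$ verbatim under the convention $u_{k+1}=-\epsilon$, so this is cosmetic. Second, your reduction proves the formula with the clipped coordinates $|\boxed u_{\pi_i}|$; for $u$ outside $[-1,1]^k$ these differ from $|u_{\pi_i}|$ and the two versions of the right-hand side can disagree. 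This is a defect of the proposition's statement rather than of your proof (the paper's own proof silently makes the same restriction), but it is worth flagging that the clipped version is the one that is actually true.
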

\begin{restatable}{lemmac}{hatPsinonempty}\label{lem:hat-Psi-nonempty}
  $\hat\Psi$ is nonempty pointwise
  if and only if $\epsilon \in (0, \tfrac 1 {2k}]$.
\end{restatable}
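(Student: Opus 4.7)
The argument goes through Proposition~\ref{prop:hat-psi-char}: after observing that $\hat\Psi(u) = \hat\Psi(\boxed u)$ (since the definition depends only on $\boxed u$), I may assume $u \in [-1,1]^k$. Nonemptiness of $\hat\Psi(u)$ is then equivalent to the existence of some index $i \in \{0,\ldots,k\}$ with $|u_{\pi_i}| - |u_{\pi_{i+1}}| \geq 2\epsilon$, where $\pi$ sorts $|u|$ descending and the augmented endpoints are $|u_{\pi_0}| = 1+\epsilon$, $|u_{\pi_{k+1}}| = -\epsilon$. For $u \in [-1,1]^k$ we have $|u_{\pi_1}| \leq 1 < 1+\epsilon$ and $|u_{\pi_k}| \geq 0 > -\epsilon$, so the augmented sequence is genuinely nonincreasing and the $k+1$ consecutive gaps telescope to $(1+\epsilon) - (-\epsilon) = 1 + 2\epsilon$.

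\textbf{Sufficiency ($\epsilon \in (0, \tfrac{1}{2k}]$).} By pigeonhole, at least one of the $k+1$ nonnegative gaps is at least $(1+2\epsilon)/(k+1)$. A one-line rearrangement shows that $(1+2\epsilon)/(k+1) \geq 2\epsilon$ iff $1 \geq 2\epsilon k$ iff $\epsilon \leq 1/(2k)$, so under the hypothesis this gap is at least $2\epsilon$ and Proposition~\ref{prop:hat-psi-char} gives $\hat\Psi(u) \neq \emptyset$.

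\textbf{Necessity ($\epsilon > \tfrac{1}{2k}$).} I exhibit a $u \in [0,1]^k$ whose $k+1$ augmented gaps are all strictly less than $2\epsilon$, which by Proposition~\ref{prop:hat-psi-char} forces $\hat\Psi(u) = \emptyset$. Since the total gap length $1+2\epsilon$ must be split into $k+1$ pieces each below $2\epsilon$, the inequality $2\epsilon(k+1) > 1 + 2\epsilon$---equivalent to $\epsilon > 1/(2k)$---is precisely what makes this feasible. When $\epsilon \leq 1/(k-1)$, I take the equally spaced values $|u_{\pi_i}| = 1+\epsilon - is$ with $s := (1+2\epsilon)/(k+1)$; the computation $s \geq \epsilon \iff \epsilon \leq 1/(k-1)$ is exactly what guarantees $|u_{\pi_1}| \leq 1$ (and symmetrically $|u_{\pi_k}| \geq 0$), while $s < 2\epsilon$ follows from $\epsilon > 1/(2k)$. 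For the residual range $\epsilon > 1/(k-1)$ (only possible when $k \geq 2$), equal spacing overshoots the hypercube, so I switch to $|u_{\pi_i}| = (k-i)/(k-1)$, whose $k-1$ middle gaps equal $1/(k-1) < 2\epsilon$ and whose two boundary gaps equal $\epsilon < 2\epsilon$.

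\textbf{Main obstacle.} The pigeonhole computation is routine, but the phantom endpoints $1+\epsilon$ and $-\epsilon$ are essential---shifting the total length from $1$ to $1 + 2\epsilon$ and the number of gaps from $k-1$ to $k+1$---and they are what produce the exact threshold $1/(2k)$. The more delicate bookkeeping is in the converse: the natural equal-spacing construction leaks outside $[0,1]^k$ once $\epsilon > 1/(k-1)$, so a small case split (or a clipped construction) is needed to cover all $\epsilon$ above the threshold with a single family of witnesses.
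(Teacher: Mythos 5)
Your proof is correct and takes essentially the same route as the paper: both directions go through the gap characterization of Proposition~\ref{prop:hat-psi-char}, with the telescoped total $(1+\epsilon)-(-\epsilon)=1+2\epsilon$ split over $k+1$ gaps producing the threshold $\tfrac{1}{2k}$, and an explicit equally-spaced witness for $\epsilon>\tfrac{1}{2k}$. The only (cosmetic) difference is the witness: the paper uses the midpoint grid $u_i=\tfrac{2i-1}{2k}$, whose middle gaps are $\tfrac1k$ and boundary gaps $\epsilon+\tfrac{1}{2k}$, so it lies in $[0,1]^k$ for every $\epsilon>\tfrac{1}{2k}$ and avoids the case split you need at $\epsilon=1/(k-1)$.
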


\subsection{Two calibrated link functions from $\hat \Psi$}

We now proceed to construct two $\epsilon$-separated links, $\psi^*_\epsilon$, which abstains as little as possible, and $\psi^\diamond_\epsilon$, which abstains as much as possible.
For sufficiently small $\epsilon$, both links are pointwise contained in $\hat\Psi$, giving calibration from Theorem~\ref{thm:embed-implies-consistent}.

\begin{definition}\label{def:link-max-istar}
  Let $\epsilon>0$ be fixed.
  Let $u\in\reals^k$, and let $\pi\in\Sc_k$ order the elements of $|u|$.
  Given any $u\in\reals^k$, let $i^*\in\{0,\ldots,k\}$ be the largest index $i$ such that $|u_{\pi_i}| - |u_{\pi_{i+1}}| \geq 2\epsilon$ where we define $|u_{\pi_0}| = 1+\epsilon$ and $|u_{\pi_{k+1}}| = -\epsilon$.
  Then define
  \begin{equation}
    \label{eq:our-link}
    \psi^*_\epsilon(u) = \ones_{\pi,i^*} \odot \signstar(u)~.
  \end{equation}
  Similarly, let $i^\diamond \in\{0,\ldots,k\}$ be the smallest index $i$ such that $|u_{\pi_i}| - |u_{\pi_{i+1}}| \geq 2\epsilon$ and define
    \begin{equation}
    \label{eq:our-link-abs}
    \psi^\diamond_\epsilon(u) = \ones_{\pi,i^\diamond} \odot \signstar(u)~.
  \end{equation}
\end{definition}

\begin{theorem}\label{thm:well-defined-cal}
  Let $\epsilon \in (0, 1/2k]$, and fix any $f\in\F_k$.
  Then
  $(L^f,\psi^*_\epsilon)$ and
  $(L^f,\psi^\diamond_\epsilon)$ are well-defined and calibrated with respect to $\ellabs$.
\end{theorem}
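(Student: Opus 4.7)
The proof is essentially an assembly of the preceding machinery, so the plan is to verify well-definedness and then chain together the containments $\psi^*_\epsilon(u), \psi^\diamond_\epsilon(u) \in \hat\Psi(u) \subseteq \Psi^f(u)$ for every $u \in \reals^k$ and every $f \in \F_k$, then invoke Theorem~\ref{thm:embed-implies-consistent}.

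First I would address well-definedness. Both $\psi^*_\epsilon$ and $\psi^\diamond_\epsilon$ require the existence of some index $i \in \{0,\ldots,k\}$ satisfying $|u_{\pi_i}| - |u_{\pi_{i+1}}| \geq 2\epsilon$ in the ordering, so that a largest such index $i^*$ and a smallest such index $i^\diamond$ exist. By Proposition~\ref{prop:hat-psi-char}, the set $\hat\Psi(u)$ is exactly the collection of $\onespi{\pi}{i} \odot \signstar(u)$ for such indices $i$, so this existence requirement is equivalent to $\hat\Psi(u) \neq \emptyset$. But Lemma~\ref{lem:hat-Psi-nonempty} guarantees that $\hat\Psi$ is nonempty pointwise precisely when $\epsilon \in (0, 1/2k]$, which is our hypothesis. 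Hence $i^*$ and $i^\diamond$ exist for every $u$, and both links are well-defined.

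Next, by construction the indices $i^*$ and $i^\diamond$ both satisfy the gap condition $|u_{\pi_i}| - |u_{\pi_{i+1}}| \geq 2\epsilon$ (with the boundary conventions $|u_{\pi_0}| = 1+\epsilon$ and $|u_{\pi_{k+1}}| = -\epsilon$), and the outputs $\psi^*_\epsilon(u) = \ones_{\pi,i^*} \odot \signstar(u)$ and $\psi^\diamond_\epsilon(u) = \ones_{\pi,i^\diamond} \odot \signstar(u)$ match exactly the form in eq.~\eqref{eq:hat-psi-char} of Proposition~\ref{prop:hat-psi-char}. Therefore $\psi^*_\epsilon(u) \in \hat\Psi(u)$ and $\psi^\diamond_\epsilon(u) \in \hat\Psi(u)$ for every $u \in \reals^k$; that is, both links are pointwise contained in $\hat\Psi$.

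Finally, Proposition~\ref{prop:psi-containment} gives $\hat\Psi(u) \subseteq \Psi^f(u)$ for every $f \in \F_k$ and every $u \in \reals^k$, so both $\psi^*_\epsilon$ and $\psi^\diamond_\epsilon$ are pointwise contained in the link envelope $\Psi^f$ built for $L^f$. By Theorem~\ref{thm:lovasz-embeds}, $L^f$ embeds $\ellabs$, so applying Theorem~\ref{thm:embed-implies-consistent} with $\ell = \ellabs$ yields that $(L^f, \psi^*_\epsilon)$ and $(L^f, \psi^\diamond_\epsilon)$ are calibrated (and in fact consistent) with respect to $\ellabs$, as desired.

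There is no real obstacle here, since the hard content has been pushed into Proposition~\ref{prop:psi-containment} and Proposition~\ref{prop:hat-psi-char}; the only step requiring a moment of care is ensuring the endpoint conventions $|u_{\pi_0}| = 1+\epsilon$ and $|u_{\pi_{k+1}}| = -\epsilon$ used to define $i^*, i^\diamond$ match the conventions used in Proposition~\ref{prop:hat-psi-char}, which they do by construction.
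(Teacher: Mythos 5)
Your proposal is correct and follows essentially the same route as the paper's proof: well-definedness via Lemma~\ref{lem:hat-Psi-nonempty} (through Proposition~\ref{prop:hat-psi-char}), pointwise containment of both links in $\hat\Psi$, then $\hat\Psi \subseteq \Psi^f$ from Proposition~\ref{prop:psi-containment}, and calibration from Theorem~\ref{thm:embed-implies-consistent}. Your version is slightly more explicit (e.g., noting that Theorem~\ref{thm:lovasz-embeds} supplies the embedding hypothesis), but the argument is the same.
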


\begin{proof}
  Lemma~\ref{lem:hat-Psi-nonempty} shows that the indices $i^*$ and $i^\diamond$ in Definition~\ref{def:link-max-istar} always exist when
  $\epsilon \in (0,\frac{1}{2k}]$,
  which shows that $\psi^*_\epsilon$ and $\psi^*_\diamond$ are well-defined.
  By construction, we have $\psi_{\epsilon}^* (u)\in \hat \Psi (u)$ and $\psi_{\epsilon}^\diamond (u)\in \hat \Psi (u)$ for all $u\in\reals^k$.
	As Proposition~\ref{prop:psi-containment} states that $\hat \Psi \subseteq \Psi^f$ pointwise, we then have $\psi^*_\epsilon, \psi^\diamond_\epsilon \in \Psi^f$ pointwise.
  Finally, Theorem~\ref{thm:embed-implies-consistent} states that any link function contained in $\Psi^f$ pointwise is calibrated.
\end{proof}

The two proposed link functions, $\psi^*_\epsilon$ and $\psi^\diamond_\epsilon$, differ by how often one abstains vs the other.
The first, $\psi^*_\epsilon$, has a smaller abstain region which decreases in volume as $\epsilon$ decreases.
Meanwhile, $\psi^\diamond_\epsilon$ has a larger abstain region which increases in volume as $\epsilon$ decreases.
Based on one's preferred risk, either $\psi^*_\epsilon$ if risk seeking otherwise  $\psi^\diamond_\epsilon$ if risk adverse could be used.
The difference between how often $\psi^*_\epsilon$ and $\psi^\diamond_\epsilon$ abstain is demonstrated for $k=2$ in Figure~\ref{fig:k2-links}.

\newpage

\section*{Acknowledgements}
We would like to thank Eric Balkanski for help with a lemma about submodular functions

\bibliographystyle{plainnat}
\bibliography{refs}

\newpage
\appendix
\section{Notation tables} \label{app:notation}
See Tables~\ref{tab:notation} and \ref{tab:notation-proofs}.

\begin{table}
	\centering
	\begin{tabular}{@{\extracolsep{4pt}}ll}
		Notation & Explanation \\
		\toprule
		$k$ & Number of binary events\\
    $[k] := \{1,\ldots,k\}$ & Index set\\
		$y \in \Y = \{-1,1\}^k$  & Label space    \\
		$v \in \V = \{-1,0,1\}^k$  & (Abstain) prediction space    \\
		$r \in \R$  & General prediction space \\
		$R = [-1,1]^k$ & The filled $\pm 1$ hypercube\\
		$u\in \reals^k$  & Surrogate prediction space    \\
		$\{u \leq c\} = \{i \in  [k] \mid u_i \leq c\}$ & Set of indices of $u$ less than $c$ \\
		$(u \odot u')_i = u_i u'_i$  & Hadamard (element-wise) product \\
		$U \odot u' = \{u \odot u' \mid u \in U\}$  & Hadamard product on a set $U \subseteq \reals^k$\\
		$\sign : \reals^k \to \V$  & Sign function including $0$\\
		$\signstar : \reals^k \to \Y$ & Sign function breaking ties arbitrarily at $0$\\
		$|u| \in \reals^k_+$ s.t. $|u|_i = |u_i|$ & Observe $|u| = u \odot \signstar(u) = u \odot \sign(u)$ \\
		$\boxed u = \sign(u) \odot \min(|u|, \ones)$ & ``Clipping'' of $u$ to $R$ \\
		$\ones_S \in \{0,1\}^k$ s.t. $(\ones_S)_i = 1 \iff i \in S$ & $0-1$ Indicator on set $S \subseteq [k]$\\
		$\pi \in \Sc_k$ & Permutations of $[k]$\\
		$f \in \F_k$ & Set of normalized, increasing, and submodular \\
		& set functions $f : 2^k \to \reals_+$.\\
		$\ell^f(r,y) = f(\{r \odot y < 0\})$ & Structured binary classification eq.~\eqref{eq:discrete-set-loss}\\
		$F(x)$ 
             & Lova\'sz extension for $x \in \reals_+^k$ in eq.~\eqref{eq:lovasz-ext}\\
		$L^f(u,y) = F((\ones - u \odot y)_+)$ & Lov\'asz hinge eq.~\eqref{eq:lovasz-hinge}\\
		$\ellabs(v,y) = f(\{v \odot y < 0\})+ f(\{v \odot y \leq 0\})$ & Structured abstain problem eq.~\eqref{eq:lovasz-embeds}\\
		\bottomrule
	\end{tabular}
\caption{Table of general notation}\label{tab:notation}
\end{table}

\begin{table}
	\centering
	\begin{tabular}{@{\extracolsep{4pt}}ll}
		Notation & Explanation \\
		\toprule
		$\ones_{\pi,i} = \ones_{\{\pi_1, \ldots, \pi_i\}}$ with $\ones_{\pi,0} = \vec 0$ & Indicator of first $i$ elements of $\pi$\\
	$V_{\pi} = \{ \ones_{\pi,i} \mid i \in \{0, \ldots, k\}\}$ & Elements of $\V$ ordered by $\pi$\\
		$V_{\pi,y} = V_\pi \odot y$ & Signed elements of $\V$  ordered by $\pi$.\\
		$P_\pi = \{ x \in \reals^k_+ \mid x_{\pi_1} \geq \ldots \geq x_{\pi_k}\}$ & elements of $\reals_+^k$ ordered by $\pi$\\
		$P_{\pi,y} = \conv V_\pi \odot y$ & Elements of $P_\pi$ signed by $y$\\
		 $\Vfaces = \cup_{\pi \in \Sc_k, y \in \Y} 2^{V_{\pi,y}}$ & Subsets of $\V$ whose convex hulls are \\
		 & faces of some $P_{\pi,y}$ polytope.\\
		$\hat \Psi(u) = \cap \{V \in \Vfaces \mid d_\infty (\conv V, u) < \epsilon\}$ & Proposed general link envelope.\\
		$\U^f = \prop{L^f}(\simplex)$ & Range of property elicited by Lov\'asz hinge\\
		$\Psi^f(u) = \cap \{U \in \U^f \mid d_\infty (U, u) < \epsilon\} \cap \V$ & Link envelope for given $f \in \F_k$.\\
		\bottomrule
	\end{tabular}
	\caption{Table of notation used for proofs}\label{tab:notation-proofs}
\end{table}

\section{Omitted Proofs}\label{app:omitted-proofs}
\subsection{Omitted Proofs from \S~\ref{sec:our-embedding}}
\lovaszhypercubedominates*
\begin{proof}
	Fix $y\in\Y$.
	Let $w = \ones - u\odot y$ and $\boxed w = \ones - \boxed u\odot y$, so that $L^f(u,y) = F(w_+)$ and $L^f(\boxed u,y) = F(\boxed w_+)$.
	We will first show that $\boxed w_+ = \min(w_+,2)$, where the minimum is element-wise.

	For $i\in[k]$ such that $|u_i| \leq 1$, we have $\boxed u_i = u_i$.
	Thus $(w_+)_i = (1 - u_i y_i)_+ = (1 - u_i y_i)_+ = (\boxed w_+)_i$.
	Furthermore, we have $0 \leq (w_+)_i = (\boxed w_+)_i \leq 2$.
	Now suppose $|u_i| > 1$.
	If $y_iu_i > 0$, i.e., $\sign(u_i) = y_i$, then $1-y_iu_i = 1-|u_i| < 0$, so $(w_+)_i = 0$.
	For $\boxed u$, we similarly have $(\boxed w_+)_i = (1-|\boxed u_i|)_+ = 0$.
	In the other case, $y_iu_i < 0$, so $(w_+)_i = 1+|u_i| > 2$ and $(\boxed w_+)_i = 1+|\boxed u_i| = 2$.
	Therefore, we have $\boxed w_+ = \min(w_+, 2)$.

	Now, let $\pi\in\Sc_k$ be a permutation that orders the elements of $w_+$.
	Observe that $\pi$ orders the elements of $\boxed w_+$ as well, since the vectors are identical except for values above 2, which are all mapped to 2.
	By eq.~\eqref{eq:lovasz-ext-pi-u}, we thus have
	\begin{align*}
	F(w_+) - F(\boxed w_+)
	&= \sum_{i=1}^k (w_+)_{\pi_i} (f(\Spi{i})-f(\Spi{i-1}))
	\\
	& \qquad     -  \sum_{i=1}^k (\boxed w_+)_{\pi_i} (f(\Spi{i})-f(\Spi{i-1}))
	\\
	&= \sum_{i=1}^k (w_+-\boxed w_+)_{\pi_i} (f(\Spi{i})-f(\Spi{i-1}))
	\\
	& \geq 0~,
	\end{align*}
	where we have used the fact that $f$ is increasing and $\boxed w_+ \leq w_+$ element-wise.
	As $y$ was arbitrary, this holds for all $y \in \Y$.
\end{proof}

\ppiy*
\begin{proof}
	For (i), take any $u\in[-1,1]^k$.
	Letting $y = \signstar(u)$, we have $u \odot y = |u| \in \reals^k_+$.
	Taking $\pi$ to be any permutation ordering the elements of $u \odot y$, we have
	$u\odot y \in P_{\pi} \cap \reals^k_+$.
	Notice, since $u\odot y \in P_{\pi} \cap \reals^k_+$ and $u\in[-1,1]^k$, we additionally have $u \odot y = |u| \in P_\pi \cap [0,1]^k$.
	Since $\onespi{\pi}{i}$ for $i \in \{0,\ldots,k\}$ form $V_{\pi}$ and $P_{\pi}$ is the convex hull of points in $V_\pi$, showing there is an $\alpha$ such that $u = \sum_i \alpha_i \onespi{\pi}{i}$ suffices to conclude $u \in P_{\pi,y}$.
	We can write $u \odot y$ as the convex combination $u\odot y = \sum_{i=0}^k \alpha_i(u\odot y) \onespi{\pi}{i}$, as in eq.~\eqref{eq:alpha-combination}.
	Thus $u = u\odot y\odot y = \sum_{i=0}^k \alpha_i(u\odot y) \onespi{\pi}{i} \odot y$, so $u\in P_{\pi,y}$.
	Therefore,  every $u\in[-1,1]^k$ is in some $P_{\pi,y}$, we have $\cup_{y\in\Y,\pi\in\Sc_k} P_{\pi,y} \supseteq [-1,1]^k$.
	Moreover, every $P_{\pi,y}\subseteq [-1,1]^k$ by construction, and equality follows.

	For (ii), first observe for all $\pi\in\Sc_k$, the function $F$ is affine on $P_\pi$, immediately from eq.~\eqref{eq:lovasz-ext-pi-u}.
	To show $L(\cdot, y') = F((\ones - u\odot y')_+)$ is affine on $P_{\pi,y}$ for all $y,y'\in\Y, \pi\in\Sc_k$, it therefore suffices to show there exists some $\pi'$ such that $\{\ones - u \odot y' \mid u \in P_{\pi,y}\} \subseteq P_{\pi'}$.
	We construct $\pi'$, unraveling the permutation $\pi$ into two permutations, depending on the sign of $y\odot y'$.
	Recall from the discussion following eq.~\eqref{eq:lovasz-ext} that $\pi$ orders the elements of $u\odot y = |u|$ in decreasing order.
	Observe that $u \odot y' = u\odot (y\odot y)\odot y' = (u\odot y) \odot (y\odot y') = |u| \odot (y\odot y')$.
	Thus, $\pi$ orders the elements of $u\odot y'$ in decreasing order among indices $i$ with $y_i y'_i > 0$, and increasing order on the others.
	Therefore $\pi$ orders the elements of $\ones - u\odot y'$ in increasing order among indices $i$ with $y_i y'_i > 0$, and decreasing order on the others.
	Taking $\pi'$ to be the order given by sorting the elements in $\{y\odot y' < 0\}$ according to $\pi$, followed by the remaining elements according to the reverse of $\pi$, we have shown $\ones - u\odot y' \in P_{\pi'}$.
\end{proof}

We now introduce a lemma used in the proof of Lemma~\ref{lem:UcapR-union-faces}.
\begin{lemmac}\label{lem:subgradients-on-relint-affine}
	Let $L : \reals^k \to \reals_+$ be polyhedral function that is affine on the polyhedron $C$.
	For any $x \in \relint(C)$ and any $z \in C$, we have $\partial L(x) \subseteq \partial L(z)$.
\end{lemmac}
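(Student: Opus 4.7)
The goal is to show that every subgradient $g \in \partial L(x)$ remains a subgradient of $L$ at any other point $z \in C$. Since $L$ is affine on $C$, write $L(u) = \langle a, u \rangle + b$ for $u \in C$; the ``natural'' slope on $C$ is $a$. The plan is to first pin down the behavior of any $g \in \partial L(x)$ along the direction $z-x$, showing $\langle g, z-x \rangle = \langle a, z-x \rangle$, and then use this identity together with the subgradient inequality at $x$ to derive the subgradient inequality at $z$.

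\textbf{Step 1: A one-sided bound from $z \in C$.} Apply the subgradient inequality $L(y) \geq L(x) + \langle g, y-x \rangle$ at $y = z$. Since $L(z) - L(x) = \langle a, z-x \rangle$ (both points lie in $C$ where $L$ is affine), this yields $\langle g, z-x \rangle \leq \langle a, z-x \rangle$.

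\textbf{Step 2: The opposite bound via $\relint(C)$.} This is where the relative-interior hypothesis does the real work. Because $x \in \relint(C)$, there exists $\epsilon > 0$ small enough that $x' := x - \epsilon (z-x) \in C$. Applying the subgradient inequality at $y = x'$ and again using that $L$ is affine on $C$ (so $L(x') - L(x) = -\epsilon \langle a, z-x \rangle$), I get $-\epsilon \langle g, z-x \rangle \leq -\epsilon \langle a, z-x \rangle$, which rearranges to $\langle g, z-x \rangle \geq \langle a, z-x \rangle$. Combined with Step 1, this gives the identity $\langle g, z-x \rangle = \langle a, z-x \rangle$.

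\textbf{Step 3: Transfer the subgradient inequality.} For arbitrary $y \in \reals^k$, start from $L(y) \geq L(x) + \langle g, y-x \rangle$, write $L(x) = L(z) - \langle a, z-x \rangle$, split $y - x = (y - z) + (z - x)$, and apply the identity from Step 2 to cancel the $\langle a, z-x \rangle$ and $\langle g, z-x \rangle$ terms. The result is $L(y) \geq L(z) + \langle g, y - z \rangle$, i.e., $g \in \partial L(z)$.

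The only real obstacle is Step 2: without the relative-interior assumption we would only obtain one side of the inequality and could not pin down $\langle g, z-x \rangle$. Note that polyhedrality of $L$ is not actually used — convexity of $L$ and affinity on $C$ suffice — but stating it matches how the lemma is invoked in the proof of Lemma~\ref{lem:UcapR-union-faces}.
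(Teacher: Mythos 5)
Your proof is correct and follows essentially the same route as the paper's: both establish that any $g \in \partial L(x)$ satisfies $\langle g, z-x\rangle = \langle a, z-x\rangle$ by exploiting $x \in \relint(C)$ to step slightly backwards along $z-x$, and then transfer the subgradient inequality from $x$ to $z$. The only cosmetic difference is that you derive the key identity via a direct two-sided bound, while the paper phrases the second inequality as a proof by contradiction; your closing remark that polyhedrality is not needed is also accurate.
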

\begin{proof}
  Fix $x\in\relint(C)$.
  Since $L$ is affine on $C$, then there exists some $w' \in \reals^k, b\in\reals$ such that $L(z) = \inprod {w'} z + b$ for all $z \in C$.
  Thus, we have $L(z) - L(x) = (\inprod{w'}{z} + b) - (\inprod{w'}{x} + b) = \inprod{w'}{z-x}$ for all $z\in C$.

We claim that for all $w \in \partial L(x)$, and all $z \in C$, we have $\inprod{w}{z-x} = \inprod{w'}{z-x}$.
To prove this claim, observe that
\begin{equation}
  \label{eq:blahblah}
  \inprod{w'}{z-x} = L(z) - L(x) \geq \inprod{w}{z-x} \text{ for all } z\in C~,
\end{equation}
by the subgradient inequality and affineness of $L$ on $C$.
Assume for a contradiction that $\inprod{w'}{z-x} > \inprod{w}{z-x}$ for some $z \in C$.
Since $x \in \relint(C)$, there is an $\epsilon < 0$ such that $z' := x + \epsilon(z-x) \in C$.
Therefore, we have 
\begin{align*}
\inprod{w'}{z'-x} = \inprod{w'}{\epsilon(z-x)} = \epsilon\inprod{w'}{z-x} < \epsilon\inprod{w}{z-x} = \inprod{w}{\epsilon(z-x)} = \inprod{w}{z'-x}~,
\end{align*}
where we use the fact that $\epsilon < 0$ to flip the inequality.
We have now contradicted eq.~\eqref{eq:blahblah} for the point $z'$.

Since we now have $L(z) - L(x) = \inprod{w'}{z-x} = \inprod{w}{z-x}$ for all $z \in C$,
consider $w \in \partial L(x)$.
Then we have, for all $v \in \reals^k$,
\begin{align*}
L(v) - L(z) &= (L(v) - L(x)) + (L(x) - L(z)) & \\
&\geq \inprod{w}{v-x} + \inprod{w}{x-z} & \\
&=\inprod{w}{v-z}~, &
\end{align*}
where the inequality follows from the subgradient inequality and the claim.
Thus $w \in \partial L(z)$, which completes the proof.
\end{proof}

A corollary of Lemma~\ref{lem:subgradients-on-relint-affine} is that subdifferentials are constant on $\relint(C)$ for any face $C$ such that $L$ is affine as the subset inclusion holds in both directions.

\UcapRunionfaces*
\begin{proof}
	Fix $p \in \simplex$.
	For any $u \in R \cap \prop{L}(p)$, there is some $\C' \subseteq \C$ such that $u \in C_j$ for all $C_j\in\C'$.
	For now, let us simply consider any $C_j \in \C'$.
	Observe that $u\in \relint (F_j)$ for exactly one face $F_j$ of $C_j$.

	By convexity of $L$, we have $u \in \prop{L}(p) \iff 0 \in \partial L(u;p)$.
	Moreover, as $u \in \relint(F_j)$, we have $\partial L(u;p)\subseteq \partial L(z;p)$ for all $z \in F_j$ by Lemma~\ref{lem:subgradients-on-relint-affine}.
	Thus, $0 \in \partial L(u;p)$ implies $0 \in \partial L(z;p)$ for all $z \in F_j$.
	Moreover, $0 \in \partial L(z;p)$ for all $z \in F_j$ if and only if $z \in \prop{L}(p)$ for all $z \in F_j$, and thus we have $F_j \subseteq \prop{L}(p)$.

	As the value $u$ and the index $j$ were arbitrary, this holds for all such faces in $G(u) := \cup \{F_j \subseteq C_j \in \C' \mid u \in \relint(F_j)\}$.
	Now, take $\F = \{ G(u) \mid u \in R \cap \prop{L}(p)\}$; hence $\prop{L}(p) \cap R = \cup \F$.
	Moreover, $\F \subseteq \cup_i \faces(C_i)$.
\end{proof}

\subsection{Omitted Proofs for \S~\ref{sec:inconsistency}}

\twobarf*
\begin{proof}
	Let $A_v = \{ v = 0 \}$ and $B_v = [k]\setminus A_v$.
	Recall that $\bar p$ is the uniform distribution on $2^k$ outcomes.
	Then we have
	\begin{align*}
	\ellabs(v;\bar p)
	&= 2^{-k} \sum_{S\subseteq [k]} f(S_v\triangle S\setminus A_v) + f(S_v\triangle S\cup A_v)
	\\
	&= 2^{-|B_v|} \sum_{T\subseteq B_v} f(T) + f(T\cup A_v)
	\\
	&= \frac 1 2 \; 2^{-|B_v|} \sum_{T\subseteq B_v} f(T) + f(B_v\setminus T) + f(T\cup A_v) + f((B_v\setminus T)\cup A_v)
	\\
	&\geq \frac 1 2 \left( f(B_v) + f(\emptyset) + f([k]) + f(A_v) \right)
	\\
	&\geq \frac 1 2 \left( f([k]) + f([k]) \right) = f([k])~,
	\end{align*}
	where we use submodularity in both inequalities.
	The second statement follows from the second equality above after setting $A_v=\emptyset$, as then $B_v = [k]$ and thus $T$ ranges over all of $2^{[k]}$.
\end{proof}

\barf*
\begin{proof}
	The inequality follows from Lemma~\ref{lem:2-bar-f} with $r \in \Y$.
	Next, note that if $f$ is modular we trivially have $\bar f = f([k])/2$.
	If $f$ is submodular but not modular, we must have some $S\subseteq [k]$ and $i\in S$ such that $f(S) - f(S\setminus\{i\}) < f(\{i\})$.
	By submodularity, we conclude that $f([k]) - f([k]\setminus\{i\}) < f(\{i\})$ as well; rearranging, $f(\{i\}) + f([k]\setminus\{i\}) > f([k]) = f([k]) + f(\emptyset)$.
	Again examining the proof of Lemma~\ref{lem:2-bar-f}, we see that the first inequality must be strict, as we have one such $T\subseteq [k]$, namely $T=\{i\}$, for which the inequality in submodularity is strict.
\end{proof}

\lovaszpropertysymmetry*
\begin{proof}
	We define $p\odot r \in \Delta(\Y)$ by $(p\odot r)_y = p_{y\odot r}$.
	\begin{align*}
	\prop{L^f}(p\odot r)
	&= \argmin_{u\in\reals^k} \sum_{y\in\Y} (p\odot r)_y L^f(u,y) \\
	&= \argmin_{u\in\reals^k} \sum_{y\in\Y} p_{y\odot r} L^f(u,y) & \text{Definition of $p\odot r$}\\
	&= \argmin_{u\in\reals^k} \sum_{y\in\Y} p_{y\odot r} L^f(u\odot r,y\odot r) & \text{Lemma~\ref{lem:lovasz-symmetry}}\\
	&= \argmin_{u\in\reals^k} \sum_{y'\in\Y} p_{y'} L^f(u\odot r,y') & \text{Substituting $y=y'\odot r$} \\
	&= \left(\argmin_{u'\in\reals^k} \sum_{y'\in\Y} p_{y'} L^f(u',y')\right)\odot r\\
	&= \prop{L^f}(p)\odot r
	\end{align*}
\end{proof}

\subsection{Omitted proofs from \S~\ref{sec:constructing-link}}

Since $\boxed u \in R$, ``clipping'' $u'$ to $\boxed{u'}$ can only reduce element-wise distance, and therefore $d_\infty(\boxed u, \cdot)$ is still small, which allows us to restrict our attention to $R$.

\ignore{\begin{lemmac}\label{lem:truncated-u-also-close}\end{lemmac}}
\begin{restatable}{lemmac}{truncatedualsoclose}\label{lem:truncated-u-also-close}
	Let $f \in \F_k$.
	For all $U \in \prop{L^f}(\simplex)$, $u\in\reals^k$, and $0<\epsilon<2$, if $d_\infty(U,u) < \epsilon$ then $d_\infty(U\cap [-1,1]^k, \boxed u) < \epsilon$.
\end{restatable}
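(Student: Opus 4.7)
The plan is to take any minimizer $u^* \in U$ with $\|u^*-u\|_\infty < \epsilon$ (which exists by definition of $d_\infty(U,u) < \epsilon$, after possibly passing to a point sufficiently close to the infimum) and show that its clip $\boxed{u^*}$ is the witness in $U \cap [-1,1]^k$ that lies within $\epsilon$ of $\boxed u$. Two facts drive the argument: $\boxed{u^*}$ remains optimal for any distribution defining $U$, and coordinate clipping is 1-Lipschitz in $\ell_\infty$.

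First, I would establish $\boxed{u^*} \in U$. Fix $p \in \simplex$ with $U = \prop{L^f}(p)$. By Lemma~\ref{lem:lovasz-hypercube-dominates}, $L^f(\boxed{u^*}, y) \leq L^f(u^*, y)$ for every $y \in \Y$, so taking expectation against $p$ gives $L^f(\boxed{u^*}; p) \leq L^f(u^*; p)$. Since $u^* \in \prop{L^f}(p)$ is already a minimizer, this inequality must in fact be an equality, and hence $\boxed{u^*}$ is also a minimizer of $L^f(\cdot;p)$. Combined with $\boxed{u^*} \in [-1,1]^k$ by definition of the clip, this places $\boxed{u^*}$ in $U \cap [-1,1]^k$.

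Next, I would verify $\|\boxed{u^*} - \boxed u\|_\infty \leq \|u^* - u\|_\infty$. Each coordinate map $\boxed{\cdot}_i$ is the projection $\min(\max(\cdot,-1),1)$ onto $[-1,1]$, which is $1$-Lipschitz on $\reals$; taking the maximum over coordinates yields $1$-Lipschitzness of the full clip in $\ell_\infty$. Thus $\|\boxed{u^*}-\boxed u\|_\infty < \epsilon$, and therefore $d_\infty(U\cap [-1,1]^k, \boxed u) < \epsilon$ as required.

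There is essentially no obstacle once one identifies the clipped minimizer as the right witness: both pieces follow immediately from Lemma~\ref{lem:lovasz-hypercube-dominates} (together with the fact that $U$ is a set of minimizers, so domination forces equality) and the elementary non-expansiveness of coordinate projection. The hypothesis $\epsilon < 2$ appears not to be used in the argument and presumably is included only because it matches the $\ell_\infty$-diameter of $[-1,1]^k$, beyond which the statement is vacuous in content.
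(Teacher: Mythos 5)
Your proof is correct and follows the same overall strategy as the paper's: pick a point of $U$ within $\epsilon$ of $u$, clip it, and show (a) the clipped point is still in $U$ and (b) clipping does not increase the $\ell_\infty$ distance. Step (a) is identical — the paper also invokes Lemma~\ref{lem:lovasz-hypercube-dominates} and the fact that domination of an optimal point forces equality of expected losses. Where you diverge is step (b): the paper runs a three-way case analysis on each coordinate (both coordinates inside $[-1,1]$, both outside, one of each), and in the ``both outside'' case it explicitly uses $\epsilon<2$ to conclude the two signs agree. You instead observe that each coordinate of $\boxed{\cdot}$ is the projection $t\mapsto\min(\max(t,-1),1)$ onto the interval $[-1,1]$, which is $1$-Lipschitz, so the whole clip is non-expansive in $\ell_\infty$. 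This is cleaner, subsumes all three of the paper's cases at once, and correctly shows that the hypothesis $\epsilon<2$ is not actually needed — your closing remark on that point is accurate. Your handling of the witness (taking a near-minimizing point rather than relying on closedness of $U$ to get an exact closest point, as the paper does) is also fine, since the target inequality is strict.
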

\begin{proof}
	Since $U$ is closed, we have some closest point $u'\in U$ to $u$, meaning $d_\infty(u',u) = d_\infty(U,u) < \epsilon$.
	As $\boxed u' \in U$ by a corollary of Lemma~\ref{lem:lovasz-hypercube-dominates}, it suffices to show $d_\infty(\boxed u, \boxed u') < \epsilon$.

	For each $i \in [k]$, we consider three cases.
	It suffices to show distance does not increase on each element by the choice of the $d_\infty(\cdot,\cdot)$ distance.

	The cases are as follows: (i) $u_i = \boxed u_i$ and $u' = \boxed u'_i$, (ii) $u_i \neq \boxed u_i$ and $u'_i \neq \boxed u'_i$, and (iii) $u_i = \boxed u_i$ and $u'_i \neq \boxed u'_i$ (WLOG).
	Case (i) is trivial as $|u_i - u'_i| = |\boxed u_i - \boxed u'_i| < \epsilon$.
	In case (ii), we must have $\sign(u)_i = \sign(u')_i$ as $d_\infty(u,u') < \epsilon \implies |u_i - u'_i| < \epsilon$.
	If both $u_i$ and $u'_i$ are outside $[-1,1]^k$, this inequality is only true (for $\epsilon < 2)$ if the sign matches.
	Therefore $|\boxed u_i - \boxed u'_i| = |\sign(u)_i - \sign(u')_i| = 0 < \epsilon$.
	In case (iii), we have $\epsilon > |u_i - u'_i| > |u_i - 1| = |u_i - \boxed u'_i|$.
	As absolute difference in each element does not increase, the $d_\infty(\cdot, \cdot)$ distance does not increase.
\end{proof}

We now proceed to statements about the link envelope construction $\hat \Psi$.

\psicontainment*
\begin{proof}
  Let us define
  \begin{align*}
    \A(u) &:= 	\{V \in \Vfaces \mid d_\infty(\conv V, \boxed u) < \epsilon\}~,
    \\
    \B(u) &:= \{U \cap \V \mid U\in \U^f, \, d_\infty(U, u) < \epsilon\}~,
  \end{align*}
	so that $\hat \Psi(u) = \cap \A(u)$ and $\Psi^f(u) = \cap \B(u)$.
  We wish to show $\cap \A(u) \subseteq \cap \B(u)$.
  It thus suffices to show the following claim: for all $B \in \B(u)$ we have some $A\in \A(u)$ with $A \subseteq B$.
  Since then $v \in \cap \A(u)$ implies $v\in A$ for all $A\in\A(u)$, which by the claim implies $v\in B$ for all $B\in\B(u)$ and thus $v\in\cap\B(u)$.

  Let $B \in \B(u)$, so we may write $B = U\cap \V$ for $U\in \U^f$ with $d_\infty(U, u) < \epsilon$.
  By Lemma~\ref{lem:truncated-u-also-close} we have $d_\infty(U \cap R, \bar u) = d_\infty(U,u) < \epsilon$.
	From Lemma~\ref{lem:p-pi-y}, the set $R = [-1,1]^k = \cup_{\pi \in \Sc_k, y \in \Y} P_{\pi,y}$ is the union of polyhedral subsets of $\reals^k$, and $L(\cdot,y)$ is affine on each $P_{\pi,y}$.
  By Lemma~\ref{lem:UcapR-union-faces}, we then have $U \cap R = \cup \F$ for some $\F \subseteq \cup_{\pi,y}\faces(P_{\pi,y})$.
  As each such face can be written as $\conv V$ for some $V\in\Vfaces$, we have some $\V' \subseteq \Vfaces$ such that $U \cap R = \cup \F = \cup_{V \in \V'}\conv V$.
  Now $\min_{V\in\V'} d_\infty(\conv V, \boxed u) = d_\infty(U \cap R, \boxed u) < \epsilon$, so we have some $V \in \V'$ such that $d_\infty(\conv V, \boxed u) < \epsilon$.
  Thus $V \in \A(u)$ by definition.
  As $\conv V \subseteq U\cap R$, we have $V = (\conv V)\cap \V \subseteq (U\cap R)\cap \V = U\cap \V = B$, which proves the claim.

\end{proof}

\begin{lemmac}\label{lem:smallest-vert-rep}
	Fix $u \in [-1,1]^k$, and consider $ \pi , y$ such that $u \in P_{\pi,y}$.
	Then $V_{\pi , y}^{u} := \{\onespi{\pi}{i} \odot y \mid i \in \{0,\ldots,k\}, \alpha_i(|u|) \neq 0 \}$
	is the
	smallest (in cardinality) set of vertices such that $V_{\pi , y}^{u} \subseteq   V_{\pi,y} $ and $u \in \mathrm{conv}(V_{\pi , y}^{u})$. 
\end{lemmac}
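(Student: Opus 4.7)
The plan is to establish two facts: first, that $u$ indeed lies in $\conv(V_{\pi,y}^u)$, providing a valid decomposition; second, that every vertex in $V_{\pi,y}^u$ is \emph{necessary} in any convex representation of $u$ by elements of $V_{\pi,y}$. Both rely on the affine independence of $V_{\pi,y}$ already noted just after eq.~\eqref{eq:P-pi}.

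For the first step, since $u \in P_{\pi,y} \subseteq [-1,1]^k$, we have $|u| = u \odot y \in P_\pi \cap [0,1]^k$ (after possibly replacing $y$ on coordinates where $u_i = 0$, which does not affect membership). By the affine decomposition in eq.~\eqref{eq:alpha-combination}, $|u| = \sum_{i=0}^k \alpha_i(|u|) \, \onespi{\pi}{i}$ is a convex combination. Multiplying element-wise by $y$ (using $u = |u|\odot y$) yields
\[
u \;=\; \sum_{i=0}^k \alpha_i(|u|) \, \onespi{\pi}{i} \odot y~,
\]
which is a convex combination of elements of $V_{\pi,y}$ supported exactly on those indices $i$ with $\alpha_i(|u|) \neq 0$. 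Dropping zero-weight terms gives $u \in \conv(V_{\pi,y}^u)$ with $V_{\pi,y}^u \subseteq V_{\pi,y}$.

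For the second step (minimality), suppose $V' \subseteq V_{\pi,y}$ satisfies $u \in \conv V'$, so $u = \sum_{v \in V'} \beta_v v$ for some convex weights $\beta_v$. Extend $\beta$ to a convex combination on all of $V_{\pi,y}$ by assigning weight $0$ to $V_{\pi,y} \setminus V'$. Because $V_{\pi,y}$ is affinely independent, every point in $\conv V_{\pi,y}$ admits a \emph{unique} expression as a convex combination of the vertices; hence this extended $\beta$ must coincide coefficient-by-coefficient with the combination from the first step. Consequently, whenever $\alpha_i(|u|) \neq 0$, the corresponding vertex $\onespi{\pi}{i} \odot y$ must belong to $V'$, i.e., $V_{\pi,y}^u \subseteq V'$. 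Taking $|V'|$ in particular gives $|V_{\pi,y}^u| \leq |V'|$, establishing the minimality claim.

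The only real subtlety is the uniqueness of convex coefficients, but this is immediate from affine independence (a standard fact about simplices), which the paper has already asserted for $V_{\pi,y}$. So the proof is essentially a bookkeeping argument combining the $\alpha$-decomposition with this uniqueness, and no significant obstacle is anticipated.
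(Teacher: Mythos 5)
Your proposal is correct and follows essentially the same route as the paper: exhibit $u$ as the convex combination $\sum_i \alpha_i(|u|)\,\onespi{\pi}{i}\odot y$ and then invoke affine independence of $V_{\pi,y}$ (uniqueness of convex coefficients in a simplex) to conclude every vertex with $\alpha_i(|u|)\neq 0$ must appear in any representing subset. Your version is slightly more explicit in the minimality step, showing $V^u_{\pi,y}\subseteq V'$ for any valid $V'$, but this is the same argument the paper makes.
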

\begin{proof}
	First, observe that $V^u_{\pi,y} \subseteq V_{\pi,y}$ by construction, as the first set is constructed the same as the second, with one additional constraint.
	Moreover, we have $u = \sum_{i=1}^k \alpha_i(|u|) u_i = \sum_{i : \alpha_i(|u|) \neq 0} \alpha_i(|u|) u_i \in \conv V^u_{\pi,y}$.

	Now recall $P_{\pi,y}$ is a simplex (see ``Linear interpolation on simplices'' \citet[pg.\ 167]{bach2013learning}) thus, by properties of simplex, each $u\in P_{\pi,y}$ has a unique convex combination expressed by the vertices of $  V_{\pi,y}$ which are affinely independent~\citep[pg.\ 14, Thm 2.3]{brondsted2012introduction}.
	Therefore, every vertex $i$ with a non-zero weighting $\alpha_i(|u|)\neq0$ is necessary in order to express $u$ as a convex combination due to the affine independence of the vertices.
	Thus, $V_{\pi , y}^{u} := \{\onespi{\pi}{i} \odot y \mid i \in \{0,\ldots,k\}, \alpha_i(|u|) \neq 0 \}$, and as $|V_{\pi , y}^{u}| < \infty$, has to be the smallest (in cardinality) set of vertices such such that $V_{\pi , y}^{u} \subseteq V_{\pi,y}$ and $u \in \mathrm{conv}(V_{\pi , y}^{u})$.
\end{proof}

Moreover, $\hat\Psi$ is symmetric around signed permutations.
\begin{lemmac}\label{lem:hat-Psi-symmetry-y}
	For all $u\in \reals^k$, $y\in \Y$, and $\pi\in \Sc_k$, we have $\hat \Psi (\pi(u\odot y))=\pi(\hat \Psi (u)\odot y)$, where we define $(\pi x)_i = x_{\pi_i}$ and we extend this operation to sets.
\end{lemmac}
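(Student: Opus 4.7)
The plan is to show that $\hat\Psi$ is equivariant under the signed permutation $\sigma\colon x\mapsto \pi(x\odot y)$, by verifying that every ingredient in the definition of $\hat\Psi$---the clipping $\boxed{\cdot}$, the $d_\infty$ metric, the family $\Vfaces$, and the intersection---commutes with $\sigma$. Since $\sigma$ is an invertible linear map on $\reals^k$ that restricts to a bijection on $\V$, this equivariance will immediately give the conclusion.

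First I would record three routine equivariances. \emph{Clipping}: since $\boxed{x}_i=\sign(x_i)\min(|x_i|,1)$ is defined coordinatewise and $y_i\in\{-1,+1\}$, a direct calculation gives $\boxed{\sigma(u)}=\sigma(\boxed u)$. \emph{Distance}: the sup norm is invariant under coordinate permutation and sign flip, so $d_\infty(\sigma(A),\sigma(B))=d_\infty(A,B)$ for all $A,B\subseteq\reals^k$; combined with affinity of $\sigma$ (hence $\conv\sigma(V)=\sigma(\conv V)$), this gives $d_\infty(\conv\sigma(V),\boxed{\sigma(u)})=d_\infty(\conv V,\boxed u)$. \emph{Intersections}: bijectivity of $\sigma$ on $\V$ means it commutes with arbitrary intersections of subsets of $\V$.

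The main step is to show $\Vfaces$ is $\sigma$-invariant. The family of simplices $\{P_{\pi',y'}\}$ is itself preserved under signed permutations: using the characterization that $x\in P_{\pi',y'}$ iff $|x|_{\pi'_1}\geq\cdots\geq|x|_{\pi'_k}$ and $x_iy'_i\geq 0$ for all $i$, one verifies directly that $\sigma(P_{\pi',y'})=P_{\pi'',y''}$ where $\pi''_i=\pi^{-1}(\pi'_i)$ and $y''=\pi(y\odot y')$. Consequently $\sigma(V_{\pi',y'})=V_{\pi'',y''}$, and since faces of a simplex correspond bijectively to subsets of its vertex set, $V\in\Vfaces$ if and only if $\sigma(V)\in\Vfaces$.

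Combining the pieces, I would substitute $V=\sigma(V')$ and apply the invariances in sequence to compute
\begin{align*}
\hat\Psi(\sigma(u))
&=\cap\{V\in\Vfaces\mid d_\infty(\conv V,\boxed{\sigma(u)})<\epsilon\}\\
&=\cap\{\sigma(V')\mid V'\in\Vfaces,\ d_\infty(\conv V',\boxed u)<\epsilon\}\\
&=\sigma\bigl(\cap\{V'\in\Vfaces\mid d_\infty(\conv V',\boxed u)<\epsilon\}\bigr)=\sigma(\hat\Psi(u)),
\end{align*}
which unpacks as the claim $\hat\Psi(\pi(u\odot y))=\pi(\hat\Psi(u)\odot y)$. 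The main obstacle is verifying closure of $\Vfaces$ under signed permutations cleanly---this requires carefully bookkeeping how $\sigma$ transforms the indexing data $(\pi',y')$---while every other step is essentially routine commutativity of a group action with the defining operations.
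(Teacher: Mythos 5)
Your proposal is correct and follows essentially the same route as the paper: both proofs establish that every ingredient in the definition of $\hat\Psi$ (the clipping $\boxed{\cdot}$, the $d_\infty$ distance, the family $\Vfaces$, and the intersection) commutes with the signed permutation, then push the symmetry through the defining intersection. The only difference is presentational: the paper splits the argument into a permutation part dismissed as ``straightforward from the definition'' and an explicit sign-flip computation (leaving the closure of $\Vfaces$ under $\odot\, y$ implicit), whereas you treat both at once as a single signed-permutation equivariance and spell out the $\Vfaces$-invariance bookkeeping, which is a worthwhile but not fundamentally different elaboration.
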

\begin{proof}
	The proof that the permutation part $(\hat\Psi(\pi u) = \pi\hat\Psi(u))$ is straightforward from the definition.
	For sign changes, observe $\boxed{u\odot y}=\sign(u\odot y)\dot \min(|u\odot y|,1)=\sign(u)\odot y\odot \min(|u|,1)=\boxed{u}\odot y$.
  The operation $u\mapsto u\odot y$ is an isometry for the infinity norm as a special case of signed permutations, here the identity permutation~\citep[Theorem 2.3]{chang1992certain}.
	For all closed
	 $U\subseteq\reals^k$, we therefore have
	$d_{\infty}(\boxed{u\odot y},U\odot y)=d_{\infty}(\boxed{u}\odot y, U\odot y)=d_{\infty}(\boxed{u},U)$.
	Therefore,
	\begin{align*}
	\hat\Psi(u\odot y)
	&= \cap \{ V \in \Vfaces \mid d_\infty (\conv V, \boxed{u\odot y}) < \epsilon\} &
	\\
	&= \cap \{ V \in \Vfaces \mid d_\infty (\conv V\odot y, \boxed{u}) < \epsilon\} & \quad\text{$\boxed{u \odot y} = \boxed u \odot y$, and $\boxed u \odot y \odot y = \boxed u$} \\[-2pt]
	& & \text{with $d_\infty$ preserved under $\odot$.}  \\
	&= \cap \{ V \in \Vfaces \mid d_\infty (\conv V, \boxed{u}) < \epsilon\}\odot y &
	\\
	&= \hat\Psi(u)\odot y~. &
	\end{align*}
\end{proof}

\hatpsichar*
\begin{proof}
	We will show the statement for $u\in \reals^k_+$ with $u_1 \geq \cdots \geq u_{k}$, i.e., where $u\in P_{\pi^*}$ where $\pi^*$ is the identity permutation.
	Lemma~\ref{lem:hat-Psi-symmetry-y} then gives the result, as we now argue.
	For any $u\in\reals^k$, let $\pi\in\Sc_k$ order the elements of $|u|$, and let $y=\signstar(u)$.
	Then $\pi(u\odot y) = \pi|u| \in P_{\pi^*}$.
	Once we show eq.~\eqref{eq:hat-psi-char} is true on the unsigned, ordered case, eq.~\eqref{eq:hat-psi-char} gives $\hat\Psi(\pi|u|) = \{\onespi{\pi^*}{i} \mid i\in\{0,1,\ldots,k\}, \; |u_{\pi_i}| \geq |u_{\pi_{i+1}}| + 2\epsilon \}$.
	Thus $\hat\Psi(u) = \hat\Psi(\pi(u\odot y)) = \pi(\hat\Psi(u)\odot y) = \{\pi(\onespi{\pi^*}{i}\odot y) \mid i\in\{0,1,\ldots,k\}, \; |u_{\pi_i}| \geq |u_{\pi_{i+1}}| + 2\epsilon \} = \{\onespi{\pi}{i}\odot \signstar(u) \mid i\in\{0,1,\ldots,k\}, \; |u_{\pi_i}| \geq |u_{\pi_{i+1}}| + 2\epsilon \}$.

	To begin, we show that for any $i\in \{ 0 ,1,\dots ,k\}$ where $u_{_i}<u_{i+1}+2\epsilon$, $\onespi{\pi^*}{i}\notin \hat \Psi (u)$ by the contrapositive.
	First, suppose that there exists an $i\in \{0, 1,\dots ,k \}$ such that $u_{i}<u_{i+1}+2\epsilon$.
	Since $u$ is ordered, we know that $0\leq u_{i} - u_{i+1} < 2\epsilon$.

	Let $z=\frac{u_i + u_{i+1}}{2}$ and define $\hat u$ such that $\hat{u}_i=z$ and $\hat{u}_{i+1}=z$ while every other index of $\hat u$ is equal to $u$.
	Observe $u_i -z < \epsilon$ and $z - u_{i+1} <  \epsilon$, and thus $d_{\infty}(u,\hat u) < \epsilon$ as $d_\infty(\cdot,\cdot)$ is measured component-wise.
	By Lemma~\ref{lem:smallest-vert-rep} and construction of $\alpha$ in the first paragraph of \S~\ref{sec:affine-decomposition}, we have $\alpha_{i}(\hat u)=\hat u_{i}-\hat u_{i+1}=0$, we have $\hat u \in \conv (V_i)$, where $V_i := V_{\pi^* , y}^{u}\setminus \{\onespi{\pi^*}{i}\}$.
	Since $\hat u \in \conv (V_i)$ and $d_{\infty}(\hat u,u)<\epsilon$, we have $V_i \supseteq \hat{\Psi}(u)$, and therefore, for any $i\in \{ 0 ,1,\dots , k\}$ such that $u_i<u_{i+1}+2\epsilon$, $\onespi{\pi^*}{i}\notin \hat \Psi (u)$.

	Now, for the converse, fix any $u\in P_{\pi^*}$ with $i \in \{ 0,1,\dots , k\}$ such that $u_i \geq u_{i+1}+2\epsilon$.
	For any $u'\in \reals^k$ such that $d_{\infty}(u,u')<\epsilon$, we claim that $\alpha_i (u') \neq 0$, and therefore $\onespi{\pi^*}{i}\in \hat \Psi(u)$.

	Assume there exists a $u'\in \reals^k$ such that $d_{\infty}(u,u')<\epsilon$ for some $i\in \{ 0,1,\dots , k\}$.
	Given that $d_{\infty}(u,u')<\epsilon$, $u'_{j}\in (u_j-\epsilon,u_j+\epsilon) \, \forall j \in \{0,\ldots, k\}$: namely, for $j = i$ and $i+1$. 
	However, since $u_i-u_{i+1}\geq 2 \epsilon$, $(u_i-\epsilon,u_i+\epsilon)\cap (u_{i+1}-\epsilon,u_{i+1}+\epsilon) = \emptyset$.
	Therefore, $\alpha_i(u') = u'_{i} - u'_{i+1} > 0$.
	By Lemma~\ref{lem:smallest-vert-rep},  we then have $\onespi{\pi^*}{i}\in V^u_{\pi^*,y}$, which is the smallest set $V$ such that $d_\infty(\conv V, u) < \epsilon$, and is therefore in the intersection of all such sets; this intersection yields $\hat \Psi(u)$.
	Thus, we have $\hat\Psi(u) = \{\onespi{\pi}{i} \odot \signstar(u) \mid i\in\{0,1,\ldots,k\}, \; u_i \geq u_{i+1} + 2\epsilon \}$.
\end{proof}

\begin{figure}[!]
	\centering
	\includegraphics[width=0.5\textwidth]{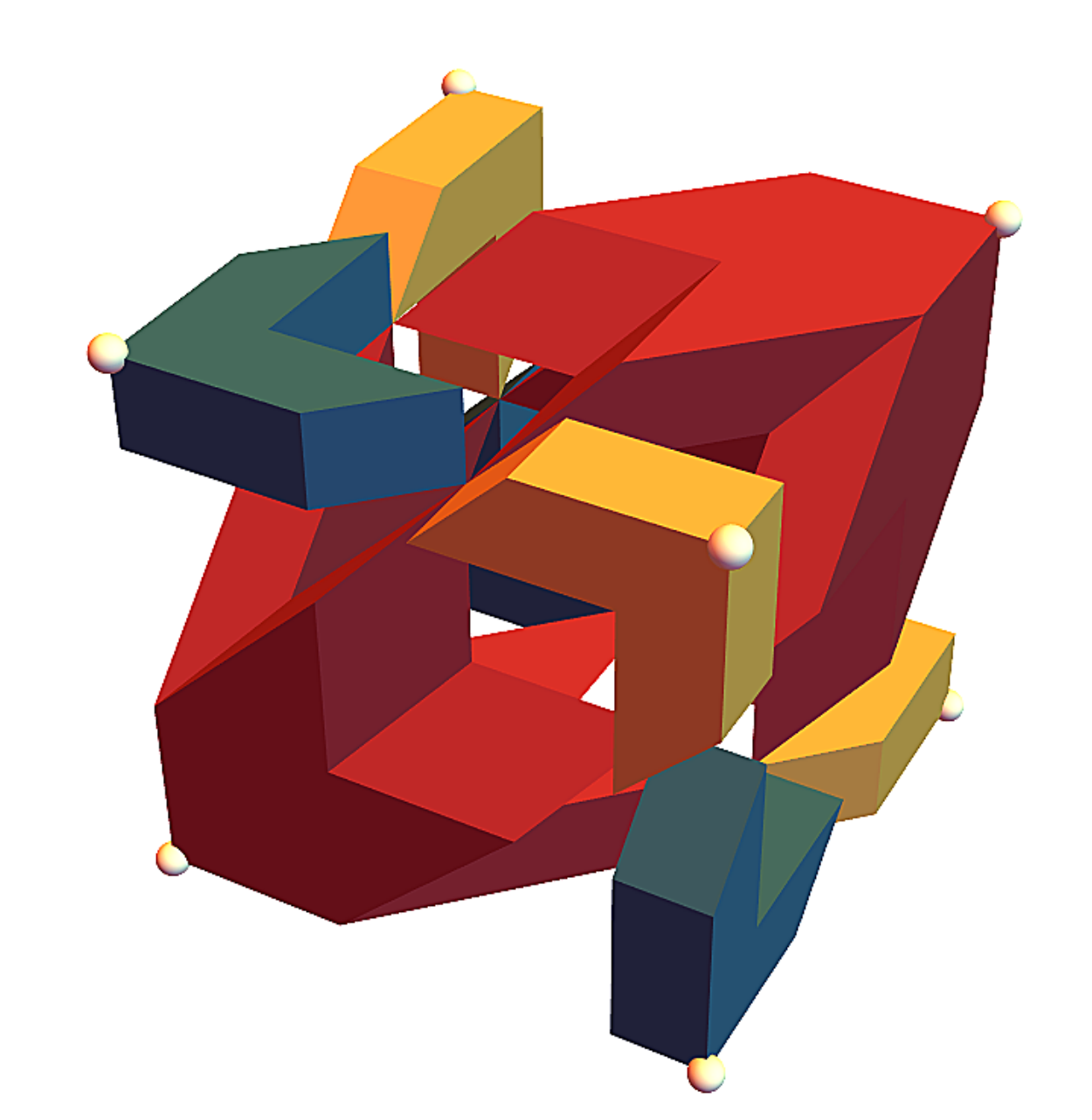}
  \caption{$\hat \Psi(u)$ for $u \in \reals_+^3$ and $\epsilon = \frac{1}{6}$. Each colored region connected to a particular node corresponds to a $v \in \{0,1 \}^3 \subseteq \V$ and at a point $u$, a calibrated link must link to one of the $v$ in the region.}
  		\label{fig:Psi-hat}
\end{figure}

\hatPsinonempty*
\begin{proof}
	By Lemma~\ref{lem:hat-Psi-symmetry-y}, it suffices to show the statement for $u\in\reals^k_+$.
	We will show the contrapositive in both directions: there exists $u\in\reals^k_+$ such that $\hat\Psi(u) = \emptyset$ if and only if $\epsilon > \tfrac 1 {2k}$.

	For any $u\in\reals^k_+$, define $u_{k+1} = -\epsilon$ and $u_{0} = 1+\epsilon$ as in Proposition~\ref{prop:hat-psi-char}.
	From the characterization in Proposition~\ref{prop:hat-psi-char} (eq.~\eqref{eq:hat-psi-char}), we have $\hat\Psi(u)=\emptyset$ if and only if
	\begin{equation}\label{eq:hat-psi-nonempty}
	u_i - u_{i+1} < 2\epsilon \text{ for all } i\in\{0,1,\ldots,k\}~.
	\end{equation}

	We may also write
	\begin{equation}\label{eq:u-telescope-trick}
	1+\epsilon = u_0 = u_{k+1} + \sum_{i=0}^{k} (u_i-u_{i+1}) = \sum_{i=0}^{k} (u_i-u_{i+1}) - \epsilon~.
	\end{equation}
	If there exists $u\in\reals^k_+$ with $\hat \Psi(u) = \emptyset$, then eq.~\eqref{eq:hat-psi-nonempty}~and~\eqref{eq:u-telescope-trick} together imply $1+2\epsilon = \sum_{i=0}^{k} (u_i-u_{i+1}) < (k+1)(2\epsilon)$, giving $\epsilon > \tfrac 1 {2k}$.
	For the converse, if $\epsilon > \tfrac 1 {2k}$, take $u\in\reals^k_+$ given by $u_i = \tfrac {2i-1}{2k}$.
	Then $u_0 - u_1 = 1+\epsilon - (1- \tfrac 1 {2k}) < 2\epsilon$ and $u_k-u_{k+1} = \tfrac 1 {2k} + \epsilon < 2\epsilon$, and for $i\in\{1,\ldots,k-1\}$, we have $u_{i+1}-u_i = \tfrac 1 k < 2\epsilon$, giving eq.~\eqref{eq:hat-psi-nonempty}.
\end{proof}

\end{document}